\newtheorem{thm}{Theorem}
\newtheorem{lem}{Lemma}
\title{Momentum via Primal Averaging: Theoretical Insights and Learning Rate Schedules for Non-Convex Optimization}
\author{%
  Aaron Defazio\\
  Facebook AI Research\\
  New York
}
\begin{document}
\maketitle

\begin{abstract}
Momentum methods are now used pervasively within the machine learning
community for training non-convex models such as deep neural networks.
Empirically, they outperform traditional stochastic gradient descent
(SGD) approaches. In this work we develop an Lyapunov analysis of SGD
with momentum (SGD+M), by utilizing a equivalent rewriting of the
method known as the stochastic primal averaging (SPA) form. This analysis is tight enough to give precise insights into when SGD+M may outperform
SGD, and what hyper-parameter schedules will work and why.
\end{abstract}

\section{Introduction}

Heavy ball methods have a long history dating back to the work of
\citet{Polyak64}. More recently, the stochastic heavy ball method,
also known as stochastic gradient descent with momentum (SGD+M), has
become a standard for deep learning practitioners since it was observed
that momentum significantly helps on common computer vision problems
\citep{Sutskever13}. 

In this work we provide an analysis of SGD+M for non-convex problems
that is much tighter than past approaches. The form of this analysis
is tight enough to provide several insights into the practical behavior
of SGD+M, including suggesting hyper-parameter schemes and indicating
why SGD+M is faster than SGD at the early stages of optimization.
We believe our analysis technique is also useful in it's own right,
and may be a good starting point for analyzing other methods that
involve momentum.

There is a substantial body of prior work on the SGD+M method. Non-asymptotic
convergence in the non-stochastic convex setting was first established
by \citet{Ghadimi2014}, where it is shown that for parameters of
the form $\beta_{k}=k/(k+2)$ and $\alpha_{k}\propto1/(k+2)$, the
method obtains last iterate convergence rates comparable to gradient
descent. They also show that when $\beta_{k}$ is constant the best
convergence rate they are able to obtain is worse than gradient descent
by a constant factor $\beta$. Unfortunately their proof technique
does not extend readily to the stochastic setting. \citet{pmlr-v40-Flammarion15}
consider both momentum and accelerated methods for convex quadratic
problems, where they are able to establish bounds using the technique
of difference equations, even with noisy (but not stochastic) gradients.

\citet{yuanyingsayed2016} analyze momentum methods under the assumption
of strong convexity and small step sizes in the online setting, and
show no actual advantage to momentum methods in this setting. \citet{Can19}
establish strong results in another special case, where gradient noise
is bounded and the objective is either strongly convex or quadratic.
\citealt{NeedellWardSrebro2014} also consider the strongly-convex
case, using proof techniques developed for the randomized Kaczmarz
algorithm. Also under a quadratic assumption, \citet{pmlr-v75-jain18a}
analyzed an accelerated scheme related to Nesterov's accelerated method
in the stochastic case. While the heavy ball method is known to provide
accelerated convergence rates for quadratic problems, these rates
provably do not extend to the non-quadratic case \citep{Kidambi18}.

\citet{yanyan2018} provide the first analysis of momentum (with
an earlier preprint \citealp{Yang16}), including Nesterov's scheme,
in the non-convex case, establishing a bound of the form:
\begin{align*}
 & \min_{k=0,\dots,t}\mathbb{E}\left[\left\Vert \nabla f(x_{k})\right\Vert ^{2}\right]\\
 & \leq\frac{2\left[f(x_{0})-f_{*}\right](1-\beta)}{t+1}\max\left\{ \frac{2L}{1-\beta},\frac{\sqrt{t+1}}{C}\right\}
 +\frac{CL\beta^{2}\left(G^{2}+\sigma^{2}+L\sigma^{2}\left(1-\beta\right)^{2}\right)}{\sqrt{t+1}\left(1-\beta\right)^{3}},
\end{align*}
where $\left\Vert \nabla f(x)\right\Vert \leq G$, $\mathbb{E}\left[\left\Vert \nabla f(x,\xi)-\nabla f(x)\right\Vert ^{2}\right]\leq\sigma^{2}$,
$C$ is a positive constant, and f is $L$-Lipschitz smooth, for method Eq. \ref{eq:standard-mom}.
This rate is much looser than the rate we establish in this work,
and our rate includes no unspecified constants. \citet{yujinyang2020}
consider the distributed non-convex setting, where they establish
a rate that is also looser than our own. A general result of almost-sure
convergence is shown by \citet{Gadat18} in the non-convex setting.

Recently, \citet{sebbouh2020convergence} establish rates for the
convex and strongly convex settings in the stochastic case that mirror
the tight rates in the deterministic case of \citet{Ghadimi2014},
using a Lyapunov function analysis. Along with \citealp{primal_averaging}
and \citealp{defazio2020factorial}, this line of work shows that
the primary advantage of the heavy ball method over SGD is that it
it is possible to show tight convergence of the last-iterate, rather
than an average of iterates (as for SGD). Last-iterate convergence
rates for SGD are weaker than the average iterate convergence unless
very careful parameter schemes are used \citep{Jain19}, and even
then only when the stopping time is known in advance.

For the non-convex setting, the closest work to ours is that of \citet{liu2020improved},
who use a Lyapunov analysis and make use of the same $z_{k}$ quantity
that we use in this work, as an ancillary point. In our view $z_{k}$
should be a key part of the algorithm, rather than a derived quantity.
They give the following bound on their Lyapunov function $\Lambda_k$:
\begin{align*}
 & \mathbb{E}[\Lambda_{k+1}]-\Lambda_{k}\\
 & \leq\left(-\alpha+\frac{-\beta+\beta^{2}}{2(1-\beta)}L\alpha^{2}+4c_{1}\alpha^{2}\right)\mathbb{E}\left[\left\Vert g_{k}\right\Vert ^{2}\right]
 +\frac{\beta^{2}}{2(1+\beta)}L\alpha^{2}\sigma^{2}+\frac{1}{2}L\alpha^{2}\sigma^{2}+2c_{1}\frac{1-\beta}{1+\beta}\alpha^{2}\sigma^{2}
\end{align*}
where $\Lambda_{k}=f(z_{k})-f^{*}+\sum_{i=1}^{k-1}c_{i}\left\Vert x^{k+1-i}-x^{k-i}\right\Vert ^{2}.$
We refer the reader to their paper for details in the values of $c$,
$\alpha$ and the settings in which this bound holds. This bound is
looser than the one we derive, and provides less insight into the
practical behavior of SGD+M than the bound we derive in this work.
In other work on the non-convex case, \citet{normalizedsgd2020} analyze
a form of SGD+M with normalized steps. The recent work of \citet{weakconvexity2020}
analyze SGD+M under a weak convexity assumption as well as in the
smooth case, using different proof techniques than we explore in this
work, resulting in a looser bound.

\section{The averaging form of momentum}
The stochastic gradient method with momentum (SGD+M) is commonly written
in the following form:
\begin{align}
m_{k+1} & =\beta_{k}m_{k}+\nabla f(x_{k},\xi_{k}),\nonumber \\
x_{k+1} & =x_{k}-\alpha_{k}m_{k+1},\label{eq:standard-mom}
\end{align}
where $x_{k}$ is the iterate sequence, and $m_{k}$ is the momentum
buffer, and $\nabla f(x_{k},\xi_{k})$ the stochastic gradient at
step $k$. For our analysis we will not use this form, instead, we
will make use of the recently discovered averaging form of the momentum
method \citep{adefazio-curvedgeom2019,sebbouh2020convergence}, also
discovered as a separate method (without relating to SGD+M) under
the name SPA (stochastic primal averaging) by \citet{primal_averaging}:
\begin{align*}
z_{k+1} & =z_{k}-\eta_{k}\nabla f\left(x_{k},\xi_{k}\right),\\
x_{k+1} & =\left(1-c_{k+1}\right)x_{k}+c_{k+1}z_{k+1}.
\end{align*}
For specific choices of values for the hyper-parameters, the $x_{k}$
sequence generated by this method will be identical to that of SGD+M.
The quantity $z_{k}$ is actually used in some early analysis of momentum
methods, but without this explicit transformation \citep{Ghadimi2014}.
A continuous time version of this update is analyzed in \citet{krichene2016},
but without relating it to the heavy ball method. 

The averaging form, compared to the standard form, appears to be easier
to analyze theoretically, as the $z$ sequence arises naturally when
performing a Lyapunov-style analysis of the method. The mapping between
the two forms is described in the following theorem.
\begin{thm}
\label{thm:correspondence}The $x_{k}$ sequences of the SPA method
and SGD+M are equal when $z_{0}=x_{0}$ and for all $k\geq0$:
\[
\eta_{k+1}=\frac{\eta_{k}-\alpha_{k}}{\beta_{k+1}},\qquad c_{k+1}=\frac{\alpha_{k}}{\eta_{k}},
\]
conversely, $\alpha_{k}=\eta_{k}c_{k+1},$ and $\beta_{k}=\frac{\eta_{k-1}}{\eta_{k}}\left(1-c_{k}\right).$
\end{thm}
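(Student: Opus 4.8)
The plan is to prove both directions by a single induction on $k$, carrying along an auxiliary invariant that links the momentum buffer of the standard form to the gap between the two iterate sequences of the averaging form. The natural guess for this invariant is
\[
z_{k+1}-x_{k}=-\eta_{k}m_{k+1},
\]
together with the statement that the $x$-sequences produced by the two methods agree through index $k$. With the standard convention $m_{0}=0$ and the hypothesis $z_{0}=x_{0}$, the base case $k=0$ is immediate: the SPA update gives $z_{1}-x_{0}=-\eta_{0}\nabla f(x_{0},\xi_{0})$, while SGD+M gives $m_{1}=\nabla f(x_{0},\xi_{0})$, so the invariant holds, and the two values of $x_{1}$ coincide exactly when $\alpha_{0}=c_{1}\eta_{0}$.

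For the inductive step I would first use the averaging update $x_{k}=(1-c_{k})x_{k-1}+c_{k}z_{k}$ to write $z_{k}-x_{k}=(1-c_{k})(z_{k}-x_{k-1})$, then substitute the inductive hypothesis $z_{k}-x_{k-1}=-\eta_{k-1}m_{k}$ to obtain $z_{k}-x_{k}=-(1-c_{k})\eta_{k-1}m_{k}$. Feeding this into the SPA $z$-update yields
\[
z_{k+1}-x_{k}=-(1-c_{k})\eta_{k-1}m_{k}-\eta_{k}\nabla f(x_{k},\xi_{k}),
\]
and comparing with $-\eta_{k}m_{k+1}=-\eta_{k}\beta_{k}m_{k}-\eta_{k}\nabla f(x_{k},\xi_{k})$ shows that the invariant propagates precisely when $(1-c_{k})\eta_{k-1}=\eta_{k}\beta_{k}$, i.e. $\beta_{k}=\tfrac{\eta_{k-1}}{\eta_{k}}(1-c_{k})$. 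Once the invariant holds at step $k$, the SPA update $x_{k+1}-x_{k}=c_{k+1}(z_{k+1}-x_{k})=-c_{k+1}\eta_{k}m_{k+1}$ matches the SGD+M step $x_{k+1}-x_{k}=-\alpha_{k}m_{k+1}$ exactly when $\alpha_{k}=c_{k+1}\eta_{k}$, i.e. $c_{k+1}=\alpha_{k}/\eta_{k}$. This closes the induction and establishes the correspondence in the first stated direction.

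The converse direction is then pure algebra on the two relations just derived: $\alpha_{k}=\eta_{k}c_{k+1}$ is a rearrangement of $c_{k+1}=\alpha_{k}/\eta_{k}$, and solving $\beta_{k+1}=\tfrac{\eta_{k}}{\eta_{k+1}}(1-c_{k+1})$ for $\eta_{k+1}$, using $c_{k+1}=\alpha_{k}/\eta_{k}$, gives $\eta_{k+1}\beta_{k+1}=\eta_{k}-\eta_{k}c_{k+1}=\eta_{k}-\alpha_{k}$, hence $\eta_{k+1}=(\eta_{k}-\alpha_{k})/\beta_{k+1}$, as claimed.

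I expect the only genuine obstacle to be \emph{finding} the right invariant $z_{k+1}-x_{k}=-\eta_{k}m_{k+1}$; after that, everything is bookkeeping. The remaining points that need care are the indexing conventions (the $\beta_{k}$ relation involves $\eta_{k-1}$, so the induction should be run on the joint statement that the invariant holds at $k$ and that the $x$-sequences agree through index $k$), the initializations $m_{0}=0$ and $z_{0}=x_{0}$, and the implicit nonvanishing requirements $\eta_{k}\neq 0$ and $\beta_{k+1}\neq 0$ under which the divisions in both the forward and the converse formulas are well defined.
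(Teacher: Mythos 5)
Your proof is correct and is essentially the paper's own argument in a lightly reorganized form: the paper defines $z_{k}=x_{k}-\left(\frac{1}{c_{k}}-1\right)\alpha_{k-1}m_{k}$ and verifies it satisfies the SPA recursions, which under $\alpha_{k-1}=\eta_{k-1}c_{k}$ is exactly your invariant $z_{k}-x_{k-1}=-\eta_{k-1}m_{k}$ propagated by induction. The algebra, the base case, and the resulting parameter identities all match, so this is the same approach.
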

This correspondence results in surprising dynamics when otherwise
reasonable hyper-parameter schedules are mapped from one form to another.
For illustration, we will consider the case where one or both of the
parameters are changed by a fixed factor, as is commonly done when
using a stage-wise schedule. We apply this change at step 20 of 100
steps, with $\beta=0.9$ and $\alpha=1.0$. Each case is shown in
Figure \ref{fig:correspondences}.
\begin{description}
\item [{(a)}] When the learning rate $\alpha$ of the SGD+M form is decreased
by a fixed factor while $\beta$ is kept constant, the learning rate
in the SPA form begins to grow geometrically, and $c$ shrinks geometrically.
This is the most common schedule used in practice for the SGD+M method,
and the fact that it causes such odd behavior in the SPA form is a
cause for concern. This schedule in SPA form is NOT supported by our
Lyapunov analysis.
\item [{(b)}] When the momentum constant $\beta$ is changed (in our example
from 0.9 to 0.8), while keeping $\alpha$ constant, a similar geometric
increase/decrease behavior occurs as in case 1.
\end{description}
Both behaviors above are unsatisfying when viewed from the perspective
of the SPA method. We may also perform the reverse operation, and
consider the behavior of the hyper-parameters of the SGD+M method
when step-wise schedules are used for the SPA form.
\begin{description}
\item [{(c)}] When $\eta_{k}$ is decreased 10 fold, a spike occurs in
$\beta_{k}$, after which $\alpha_{k}$ drops 10 fold and $\beta_{k}$
drops back to it's earlier value.
\item [{(d)}] When $c_{k}$ is increased 10 fold, then the SGD+M form is
better behaved, as $\alpha_{k}$ increases 10 fold and $\beta_{k}$
drops to 0. This is reasonable behavior as this change corresponds
to removing the momentum in both forms, while attempting to keep the
effective step size the same.
\item [{(e)}] As we show in Section \ref{subsec:insight-reduce-averaging},
the most theoretically motivated choice is to actually change both
$\eta_{k}$ and $c_{k}$. This unfortunately also results in a spike
in $\alpha_{k}$
\item [{(f)}] Replacing the sudden change in $\eta_{k}$ and $c_{k}$ by
a gradual change removes the spike and keeps $\beta_{k}$ below $1$.
We show in Section \ref{subsec:insight-no-abrupt-changes} that a
gradual change is actually required by our Lyapunov theory.
\end{description}

\begin{figure}
\centering
SGD+M to SPA
\subfigure[$\alpha_{k}$ decreased]{
\includegraphics{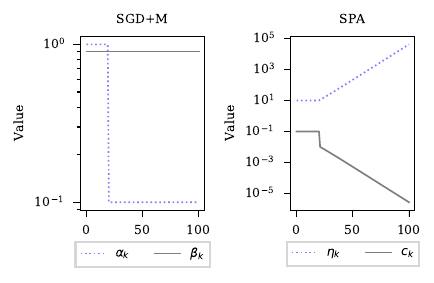}
}\subfigure[$\beta_{k}$ decreased]{
\includegraphics{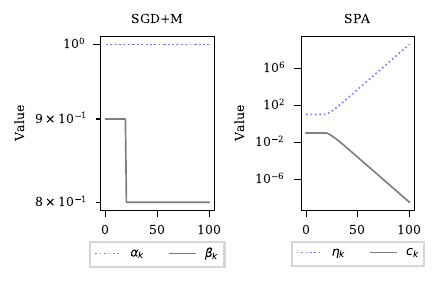}
}

\vspace{1em}
\rule[0.5ex]{1\textwidth}{1pt}
\vspace{1em}

SPA to SGD+M
\subfigure[$\eta_{k}$ decreased]{
\includegraphics{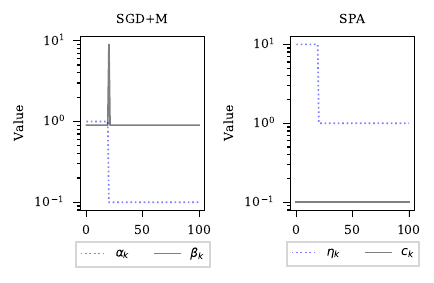}
}\subfigure[$c_{k}$ increased ]{
\includegraphics{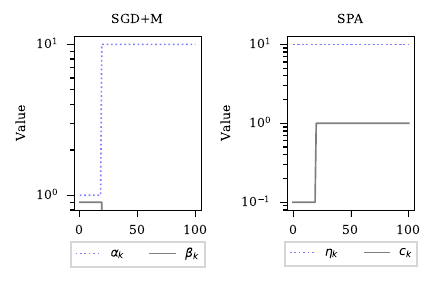}
}
\subfigure[$\alpha_{k}$ decreased and $c_{k}$ increased]{
\includegraphics{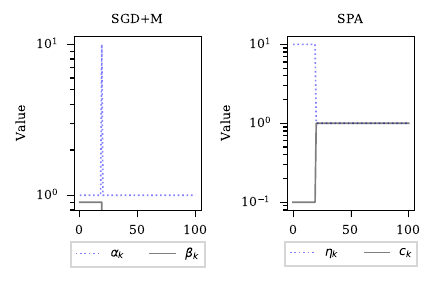}
}\subfigure[\label{fig:gradual-schematic}$\alpha_{k}$ decreased and $c_{k}$
increased gradually]{
\includegraphics{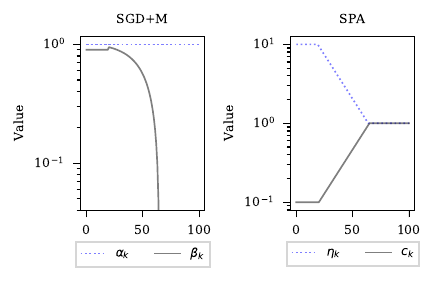}
}\caption{\label{fig:correspondences}The behavior of the hyper-parameters of
the SPA form when they are set so as to maintain an identical iterate
sequence as the SGD+M form, and vice-versa.}
\end{figure}

\section{Lyapunov analysis }

In the Lyapunov analysis technique, a non-negative function $\Lambda_{k}=\Lambda(x_{0:k},z_{0:k},\dots)$
is defined in terms of all indexed quantities in the algorithm up
to the current time-step, for the purposes of controlling the convergence
of the optimization method under analysis. In the convex case, the
standard approach is to show that $\mathbb{E}\left[f(x_{k})-f_{*}\right]\leq\Lambda_{k}-\mathbb{E}\left[\Lambda_{k+1}\right]+\text{noise}$,
after which we can apply a telescoping argument to complete the proof.
In the non-convex case we instead attempt to control the norm of the
gradient of $f$, through a bound of the form:
\[
d_{k}\left\Vert \nabla f(x_{k})\right\Vert ^{2}\leq\Lambda_{k}-\mathbb{E}\left[\Lambda_{k+1}\right]+\text{noise}
\]
where $d_{k}$ is some constant, and with expectations over randomness
in the current step $k$, conditional on prior steps (we use this
convention in the remainder of this work). We call an equation of
this form a Lyapunov step equation. In the case of SGD it is straight-forward
to show that the Lyapunov step takes the following form, assuming $\mathbb{E}\left[\left\Vert \nabla f(x_{k},\xi_{k})\right\Vert ^{2}\right]\leq G^{2}$) and that $f$ is $L$-Lipschitz smooth:
\begin{equation}
\frac{1}{\eta_{k}}\mathbb{E}\left[\left\Vert \nabla f(x_{k})\right\Vert ^{2}\right]\leq\Lambda_{k}-\mathbb{E}\left[\Lambda_{k+1}\right]+\frac{1}{2}LG^{2}+R_{k},\label{eq:sgd-lya-step}
\end{equation}
where $\Lambda_{k}=\eta_{k}^{-2}\mathbb{E}\left[f(x_{k})-f_{*}\right]$
and $R_{k}=(\eta_{k}^{-2}-\eta_{k-1}^{-2})\left[f(z_{k})-f_{*}\right]$.
From this Lyapunov step equation, a standard telescoping argument
(we give details in the appendix) completes the convergence rate proof,
yielding a bound on $\mathbb{E}\left[\left\Vert \nabla f(x_{i})\right\Vert ^{2}\right]$
for a randomly sampled $i$.

\subsection{Momentum case}
In the appendix, we construct the following Lyapunov function $\Lambda$
for the SGD+M method in SPA form:
\begin{align}
\Lambda_{k+1} & =\frac{1}{\eta_{k}^{2}}\left[f(z_{k+1})-f_{*}\right]+\frac{L}{\eta_{k}}\left(\frac{1}{c_{k}}-1\right)\left[f(x_{k})-f_{*}\right] + \frac{L}{2\eta_{k}^{2}c_{k+1}^{2}}\left\Vert x_{k+1}-x_{k}\right\Vert ^{2} \label{eq:nonstatic-lya} 
\end{align}
The Lyapunov step equation for $k\geq1$, with expectations conditioning
on $x_{k}$ and prior gradients $\text{\ensuremath{\nabla f(x_{i}}) for \ensuremath{i\leq k}}$
is:
\begin{align}
 & \frac{1}{2\eta_{k}}\left\Vert \nabla f(x_{k})\right\Vert ^{2}+\frac{1}{2\eta_{k}}\left\Vert \nabla f(z_{k})\right\Vert ^{2}\nonumber \\
 & \leq\Lambda_{k}-\mathbb{E}\left[\Lambda_{k+1}\right]+L\mathbb{E}\left[\left\Vert \nabla f(x_{k},\xi_{k})\right\Vert ^{2}\right]+R_{k}\nonumber \\
 & +\frac{1}{2}\left[\frac{1}{\eta_{k}^{2}}\left(\frac{1}{c_{k}}-1+\eta_{k}L\right)\left(\frac{1}{c_{k}}-1\right) 
 +\frac{1}{\eta_{k}}L\left(\frac{1}{c_{k}}-1\right)^{2}-\frac{1}{\eta_{k-1}^{2}c_{k}^{2}}\right]L\left\Vert x_{k}-x_{k-1}\right\Vert ^{2}.\label{eq:lyapunov-step}
\end{align}
where the remainder term $R_{k}$ is defined as:
\begin{align*}
R_{k} & ={\textstyle \left[\frac{L}{\eta_{k}}\left(\frac{1}{c_{k}}-1\right)-\frac{L}{\eta_{k-1}}\left(\frac{1}{c_{k-1}}-1\right)\right]\left[f(x_{k-1})-f_{*}\right]} +{\textstyle \left[\frac{1}{\eta_{k}^{2}}-\frac{1}{\eta_{k-1}^{2}}\right]\left[f(z_{k})-f_{*}\right]}
\end{align*}
This bound is our key theoretical result. We give the full telescoped proof using this bound in the appendix
yielding a $\mathcal{O}(k^{-1/2})$ rate. The key differences between
this bound and the bound for SGD (Equation \ref{eq:sgd-lya-step})
are:
\begin{enumerate}
\item The convergence rate is in terms of $\frac{1}{2\eta_{k}}\left\Vert \nabla f(z_{k})\right\Vert ^{2}+\frac{1}{2\eta_{k}}\left\Vert \nabla f(x_{k})\right\Vert ^{2}$
for SGD+M compared to $\frac{1}{\eta_{k}}\left\Vert \nabla f(x_{k})\right\Vert ^{2}$
for SGD. When we telescope to give a convergence rate bound, the bound
is on a randomly sampled iterate from a weighted set of $x_{k}$ and
$z_{k}$ rather than just $x_{k}$.
\item There is an extra $\left\Vert x_{k}-x_{k-1}\right\Vert ^{2}$ term
on the right which will be negative and hence beneficial for typical
choices of the hyper-parameters, as we show in Section \ref{subsec:insight-no-abrupt-changes}.
\item The noise term $\mathbb{E}\left[\left\Vert \nabla f(x_{k},\xi_{k})\right\Vert ^{2}\right]$
is weighted by $L$ for SGD+M and $\frac{1}{2}L$ for SGD. Although
this noise term is twice as large for SGD+M, we show in Section \ref{subsec:noise-cancelation},
that almost half of it is canceled by the negative $\left\Vert x_{k}-x_{k-1}\right\Vert ^{2}$
term when additional assumptions are made, meaning that the noise
is actually essentially the same as SGD.
\item The Lyapunov function of SGD is just $\eta_{k}^{-2}f(x_{k})$, whereas
the Lyapunov function of SGD+M involves $\eta_{k}^{-2}f(z_{k})$ plus
two other terms. After telescoping for $T$ steps (as we show in the
appendix), the $\left\Vert x_{k+1}-x_{k}\right\Vert ^{2}$ term drops
out, and the $\left[f(x_{k})-f_{*}\right]$ term decays at a rate
$\sqrt{T}$ faster than the other terms, making it negligible at the
end of optimization for typical values of $c_{k}$, i.e. when $\left(\frac{1}{c_{1}}-1\right)\ll\sqrt{T}$.
These terms appear to be the main limiting factor for how small $c_{k}$
can be chosen (i.e. how much momentum is used).
\item The $R_{k}$ term is 0 when $\eta_{k}=\eta_{k-1}$ and $c_{k}=c_{k-1}$,
otherwise it contains an ``error'' accumulated from changing the
hyper-parameters. In a stage-wise hyper-parameter scheme this error
accumulation happens only at the end of each stage, and it's contribution
to the final convergence rate bound will be weighted with $1/T$, significantly
smaller than the $1/\sqrt{T}$ weight of the primary terms. This is
similar behavior to the $R_{k}$ term in the SGD step equation.
\end{enumerate}

\section{Insight \#1: Momentum may cancel out noise during early iterations}

\label{subsec:noise-cancelation} The noise term in the Lyapunov step
of SGD+M is twice as large as the noise term $\frac{1}{2}L\mathbb{E}\left[\left\Vert \nabla f(x_{k},\xi_{k})\right\Vert ^{2}\right]$
in SGD. Although typically such small differences are disregarded
in the analysis of optimization methods, in this case we believe that
this term gives substantial insight into the practical behavior of
the two methods. The difference between the bounds on the convergence
rate of the two methods will depend crucially on the magnitude of
the negative $\left\Vert x_{k}-x_{k-1}\right\Vert ^{2}$ term in comparison
to this noise term. When this negative iterate difference term is
sufficiently large, SGD+M can be expected to converge faster than
SGD. In this section we analyze this term in detail. We will assume
in this section that $c_{k}=c$ and $\eta_{k}=\eta$ are independent
of $k$, we consider in Section \ref{subsec:insight-no-abrupt-changes}
what happens to $\left\Vert x_{k}-x_{k-1}\right\Vert ^{2}$ when they
change in a step-wise scheme.

Firstly note that the the weight of $\left\Vert x_{k}-x_{k-1}\right\Vert ^{2}$
in the Lyapunov step (\ref{eq:lyapunov-step}) can be written in the
following form after expanding and simplifying when using constant
hyper-parameters:
\[
\frac{L}{2}\left[-\frac{2}{\eta^{2}c}+\frac{1}{\eta^{2}}+\frac{L}{\eta c^{2}}-\frac{L}{\eta c}\right].
\]
To understand the magnitude of $\left\Vert x_{k}-x_{k-1}\right\Vert ^{2}$,
we may consider it's recursive expansion:
\begin{align}
  \left\Vert x_{k}-x_{k-1}\right\Vert ^{2} & = \left(1-c\right)^{2}\left\Vert x_{k-1}-x_{k-2}\right\Vert ^{2}\nonumber\\
 & +c^{2}\eta^{2}\left\Vert \nabla f\left(x_{k-1},\xi_{k-1}\right)\right\Vert ^{2} -2\eta c^{2}\left(\frac{1}{c}-1\right)\left\langle \nabla f(x_{k-1}),x_{k-1}-x_{k-2}\right\rangle .\label{eq:unwinding}
\end{align}
This recursive expression may be further unwound, giving a geometrically
decreasing weighted sequence. We consider the inner-product term in
the next section, for the moment we assume that it has expectation
zero. The gradient term $\left\Vert \nabla f\left(x_{k-1},\xi_{k-1}\right)\right\Vert ^{2}$
here gives some insight into why we may expect cancelation against
the noise term in the Lyapunov step. When this expression is unwound,
it contains a contribution from all past gradients:
\[
\sum_{i=0}^{k}\left(1-c\right)^{2i}c^{2}\eta^{2}\mathbb{E}\left[\left\Vert \nabla f\left(x_{k-i},\xi_{k-i}\right)\right\Vert ^{2}\right],
\]
So the noise term $\frac{1}{2}L\mathbb{E}\left[\left\Vert \nabla f(x_{k},\xi_{k})\right\Vert ^{2}\right]$
is not canceled immediately by the negative iterate distance $\left\Vert x_{k}-x_{k-1}\right\Vert ^{2}$,
instead, it cancels part of the noise from past iterations. In fact,
we can see that after some step $i$, the noise term introduced by
that step over and above SGD, namely $\frac{1}{2}L\mathbb{E}\left[\left\Vert \nabla f(x_{i},\xi_{i})\right\Vert ^{2}\right]$
will be partially negated at every successive step, in a geometrically
decaying fashion. Considering it as an infinite sum, we have:
\begin{align*}
\sum_{i=0}^{\infty}\left(1-c\right)^{2i} & =\frac{1}{1-(1-c)^{2}}=\frac{1}{c\left(2-c\right)}
\end{align*}
Is this sufficient for the negative terms to cancel the additional
noise over SGD? Let's consider the weight heuristically before providing
a more precise argument. Firstly, consider the weight in front of
$\left\Vert x_{k}-x_{k-1}\right\Vert ^{2}$ . The dominating term
in this expression for small $\eta$ and $c$ is $-L/\eta^{2}c$.
The $\left\Vert \nabla f(x_{i},\xi_{i})\right\Vert ^{2}$ term is
multiplied by $c^{2}\eta^{2}$ in the geometric sum. The infinite
sum is above is $1/2c$ for small $c$, so we find that we have:
\[
-\frac{L}{\eta^{2}c}\cdot c^{2}\eta^{2}\cdot\frac{1}{2c}=\frac{L}{2},
\]
which is exactly large enough to cancel the additional noise. We can
make this argument precise using the tools of Lyapunov analysis, without
requiring the above simplifications. In particular, we can augment
the Lyapunov function with an additional term:
\[
\frac{L}{2\eta c^{2}}\left[\frac{L\left(1-c\right)}{c\left(2-c\right)}-\frac{1}{\eta}\right]\left\Vert x_{k+1}-x_{k}\right\Vert ^{2}.
\]
As we shown in the appendix, as long as $\eta\leq\frac{2c\left(2-c\right)}{L(1-c)},$
this term captures the additional noise introduced at each step $(k=i)$,
and how it decays geometrically overtime. With the addition of this
term in the Lyapunov function, the noise term reduces to \[
\left(1+\frac{\eta L\left(1-c\right)}{c\left(2-c\right)}\right)\frac{L}{2}\mathbb{E}\left[\left\Vert \nabla f(x_{i},\xi_{i})\right\Vert ^{2}\right],
\]
almost matching SGD except for the term $\frac{\eta L\left(1-c\right)}{c\left(2-c\right)},$which
is very small for the $\eta\propto1/(L\sqrt{T)}$ values that the
theory supports. Note however that by expanding $\left\Vert x_{k}-x_{k-1}\right\Vert ^{2}$
we must also consider the additional inner-product terms introduced
in Eq. \ref{eq:unwinding}, which we do in the next section.

\paragraph{When momentum helps}

By expanding the recursive definition of $\left\Vert x_{k}-x_{k-1}\right\Vert ^{2}$,
we have halved the noise term, but at the expense of introducing an
inner-product term proportional to 
\[
-2\eta c^{2}\left(\frac{1}{c}-1\right)\sum_{i=0}^{k}\left(1-c\right)^{2i}\left\langle \nabla f\left(x_{i}\right),x_{i}-x_{i-1}\right\rangle .
\]
\sloppy This term gives a precise characterization of when the convergence
rate bound for SGD+M will be tighter than SGD; when for a particular
weighted average, each $\nabla f\left(x_{i-1}\right)$ is on average
positively aligned or at worst orthogonal to the momentum buffer:
$m_{i-1}\propto-\left(x_{i-1}-x_{i-2}\right)$. If on average they
are highly positively correlated, then we can expect momentum methods
to significantly outperform non-momentum methods. 

The correlation between the momentum buffer and the next gradient
is not assured during optimization. Intuitively, a high correlation
can be expected when the optimization path is heading in a steady
direction, rather than oscillating around a minima or valley. This
is particularly the case in the early stages of optimization, where
there is a clear descent direction, in contrast to the later stages
of optimization, where the optimization path will typically bounce
around a minima or valley due to the noise introduced by using stochastic
gradients. When the optimization path bounces around significantly,
we would expect this inner-product term to be close to zero in expectation.
So although the worst case behavior of SGD+M the convergence rate
bound has double the noise of SGD, in practice we expect a behavior
where at the early stages of optimization it may be faster, and at
the later stages of optimization it will converge at the same rate
as it enters a more noise dominated regime.

\paragraph{An empirical study}
\begin{figure*}
\begin{centering}
\includegraphics[width=0.9\textwidth]{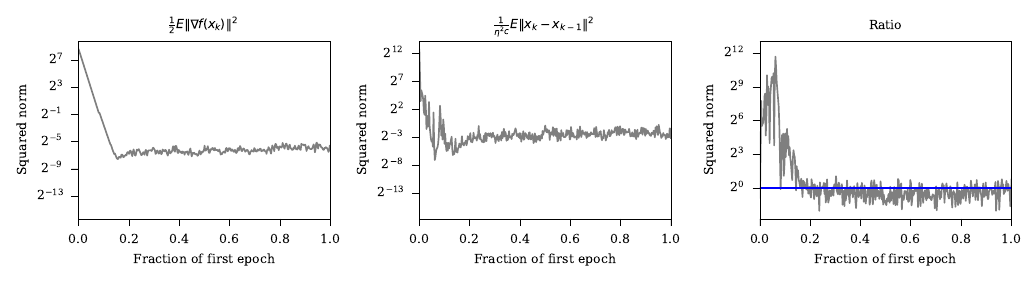}\vspace{-0.5em}
\par\end{centering}
\begin{centering}
\includegraphics[width=0.9\textwidth]{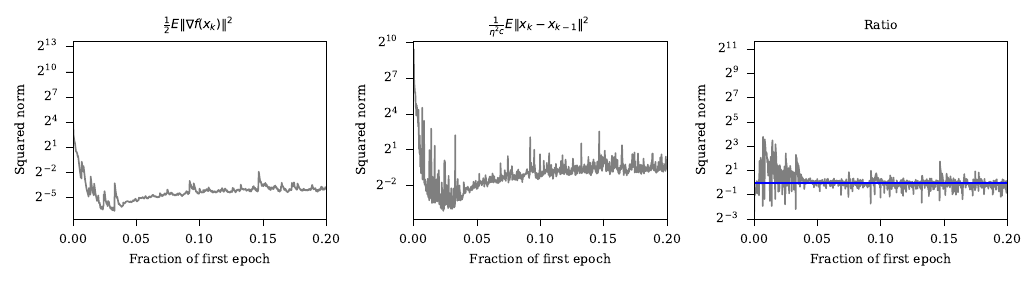}\vspace{-1em}
\par\end{centering}
\caption{{\label{fig:cancelation}Quantities shown are during
CIFAR10 (top row) and ImageNet (bottom row) training with momentum
0.9. Full details of the experimental setup are in the appendix. The
extra negative $x_{k}-x_{k-1}$ term cancels out the large gradient
norm squared term when the shown ratio (right) is above 1. Here this
occurs for the initial steps during the first epoch of training.}}
\end{figure*}
This result also suggests that momentum may ONLY be useful during
the very earliest iterations. In the case of the CIFAR10 \citep{cifar10} problem shown,
it appears to only provide a positive benefit for less than half of
the first epoch, and the benefit is even shorter for ImageNet \citep{imagenet}. To
test this hypothesis, we did a comparison where we turned off momentum
after the first epoch. As shown in Figure \ref{fig:dropm}, this gives
the same test error curve and final test error as for when momentum
is used for the whole run.
\begin{figure}
\begin{centering}
\includegraphics[width=0.3\columnwidth]{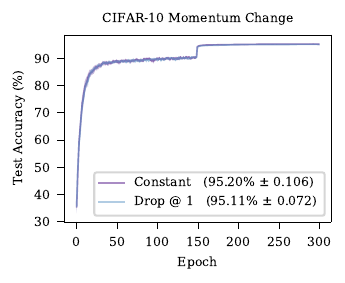} \hspace{4em}\includegraphics[width=0.3\columnwidth]{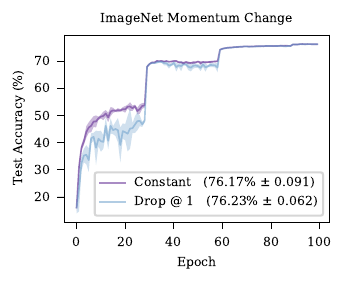}\vspace{-1.5em}
\par\end{centering}
\caption{\label{fig:dropm}Removing momentum by setting $c_{k}=0$
after the first epoch has no negative consequences on the final test
accuracy for either problem. Experimental setup detailed in Section \ref{subsec:insight-no-abrupt-changes}.}
\end{figure}
Our theory suggests that we may directly measure when momentum is
having a positive effect on convergence by comparing the expectations
of the quantities $\frac{1}{\eta^{2}c}\left\Vert x_{k}-x_{k-1}\right\Vert ^{2}$
to $\frac{1}{2}\left\Vert \nabla f(x_{k})\right\Vert ^{2}$. Figure
\ref{fig:cancelation} shows the magnitudes of these two quantities
(smoothed using an exponential moving average to approximate the expectation),
as well as the ratio on two test problems. When considering the ratio,
the $\left\Vert x_{k}-x_{k-1}\right\Vert ^{2}$ term is significantly
bigger at the earliest stages of optimization, and then quickly approaches
the ``noise'' level of 1, corresponding to the inner-product discussed
above being on average $0$. Interestingly, the gradient norm is also
very large during these early iterations, which may explain why momentum
helps so much: It negates the contribution of the noise term to the
convergence rate bound during the iterations when it is largest.
\vspace{-0.5em}
\section{Insight \#2: Reduce $c_{k}$ when you decrease $\eta_{k}$}
\vspace{-0.5em}\label{subsec:insight-reduce-averaging}
\begin{figure*}\vspace{-0.5em}
\begin{centering}
\includegraphics[width=0.3\columnwidth]{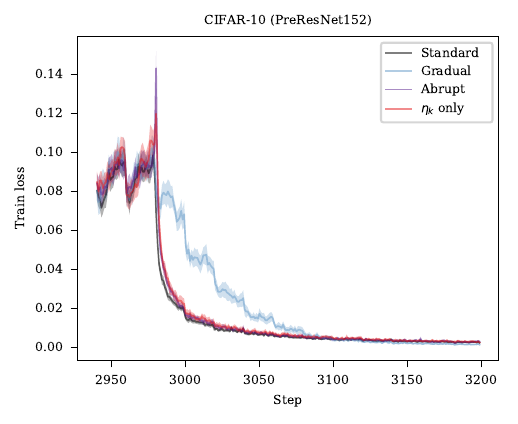}
\includegraphics[width=0.3\columnwidth]{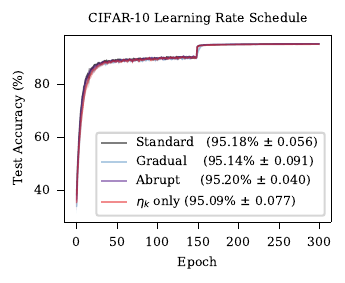}\includegraphics[width=0.3\columnwidth]{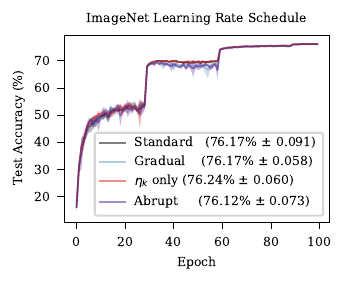}\vspace{-1em}
\par\end{centering}
\caption{Left: \label{fig:gradual_train_loss}training loss
before and after then annealing point where the learning rate is decreased
by a factor 10. Right: \label{fig:gradual_test_accuracy} A comparison of the standard SGD+M, versus primal averaging form with a $\eta_k$ only schedule, versus both $c_k$ and $\eta_k$ decreased either abruptly at 30,60 \& 90 epochs or gradually.}\vspace{-0.5em}
\end{figure*}
Consider the remainder term $R_{k}$:
\begin{align*}
R_{k} & ={\textstyle \left[\frac{L}{\eta_{k}}\left(\frac{1}{c_{k}}-1\right)-\frac{L}{\eta_{k-1}}\left(\frac{1}{c_{k-1}}-1\right)\right]\left[f(x_{k-1})-f_{*}\right]} + {\textstyle \left[\frac{1}{\eta_{k}^{2}}-\frac{1}{\eta_{k-1}^{2}}\right]\left[f(z_{k})-f_{*}\right]}.
\end{align*}
 This term contains the additional error accumulated when the step
size is changed. Our hyper-parameter choices should aim to keep this
term small if possible. The second term involving $\left[f(z_{k})-f_{*}\right]$
is exactly the remainder term that appears in SGD theory, and so we
would not expect to be able to control it further. The first line
involves both $c$ and $\eta$, and so we have a degree of control
over it. We are particularly interested in stage-wise schemes, where
at a certain time-step $T$ the step-size $\eta$ is divided by a
factor $\phi$ (typically 10), i.e. $\eta_{T}=\eta_{T-1}/\phi$. In
that case, we may keep the first term's coefficient at 0 if we choose
parameters satisfying:
\[
\frac{1}{c_{T}}=1+\frac{1}{\phi}\left(\frac{1}{c_{T-1}}-1\right).
\]
For small $c$, this is approximately $c_{T}=\phi c_{T-1}$. I.e.
when the step size is decreased by a factor $\phi$, we should increase
$c$ by that same $\phi$ factor. Using the equivalence in Theorem
\ref{thm:correspondence}, we can see that when constant step sizes
are used, the equivalence:
\[
\beta=\left(1-c\right),\qquad\alpha=\eta c,
\]
suggests that decreasing $\eta$ and increasing $c$ proportionally
actually leaves the step size $\alpha$ the same, but decreases the
amount of momentum $\beta$ in the SGD+M form. This suggests an alternative
approach to the learning rate schedule, when working in SGD+M form:
Decrease $\beta$ rather than decrease $\alpha$, up to the point
where $\beta=0$, corresponding to SGD without momentum.

Unfortunately, this scaling still presents problems, as we see in
Figure \ref{fig:correspondences}, there is an instantaneous spike
in $\alpha_{k}$ when using this approach. Changing the learning rate by 
a large factor suddenly also effects the constants in front of the $\left\Vert x_{k}-x_{k-1}\right\Vert ^{2}$
term in the Lyapunov step, resulting in this term being positive,
rather than negative. We explore this difficulty and a potential solution
in the next section.

\section{Insight \#3: Change hyper-parameters gradually}
\vspace{-0.5em}
\label{subsec:insight-no-abrupt-changes} When constant momentum and
step sizes are used, the weight of the term $\left\Vert x_{k}-x_{k-1}\right\Vert ^{2}$
in the Lyapunov step is non-positive for values of $\eta$ larger
than the typical $2/L$ maximum required for non-momentum methods:
\begin{equation}
\eta\leq\frac{2-c}{Lc(1-c)}.\label{eq:hyper-req}
\end{equation}
However, when $\eta$ changes abruptly by large amounts between steps,
this expression can not be satisfied. Instead, lets determine the
largest multiplicative change in $\eta$ allowed between steps. Let
$\eta_{k}=\eta_{k-1}/r$, where we expect $r$ to be larger than 1.
We use $\eta$ to denote $\eta_{k-1}$ to simplify the notation. We
also apply $\eta L\leq1$ to simplify. This gives:
\[
\frac{r^{2}}{\eta^{2}}\left(\frac{1}{c^{2}}-\frac{1}{c}\right)+\frac{r}{\eta}L\left(\frac{1}{c}-1\right)^{2}\leq\frac{1}{\eta^{2}c^{2}},
\]
Therefore $r^{2}-r^{2}c+r\eta L\left(1-c\right)^{2}-1\leq0$.
Solving this quadratic equation gives two roots, one of which is always
negative, the other root is:
\begin{align*}
r & =\frac{-\eta L\left(1-c\right)^{2}+\sqrt{\eta^{2}L^{2}\left(1-c\right)^{4}+4(1-c)c}}{2(1-c)}.
\end{align*}
For instance with $c=0.1$ $\eta L=0.1$, a value of $r=1.01$ satisfies
the inequality. Note that when the learning rate is decreased further,
the allowable values of $r$ increase. This suggests that at the point
in which the learning rate would normally decrease by a large factor
such as 10 in a stage-wise schedule, instead the learning rate should
be decreased geometrically, by a factor $\alpha$ each step, until
it reaches the 10x lower value. An example of this kind of schedule
is shown in Figure \ref{fig:gradual-schematic}. Notice that the $\alpha_{k}$
and $\beta_{k}$ values stay very reasonable under this gradual scheme
compared to the other schemes considered so far. This will happen
in a matter of a few epochs for typical problems such as CIFAR-10
training.
\vspace{-0.5em}
\paragraph{An empirical study}
The violation of the inequality that occurs when the learning rate
is changed suddenly is \textbf{not just an artifact of the analysis} used,
a spike in the training loss is readily observed in practice, An example
that occurs during CIFAR-10 training is shown in Figure \ref{fig:gradual_train_loss}. Full details of the experimental setup are available in the Appendix.
The gradual approach avoids the spike seen when the learning rate
is changed suddenly. Although the training loss recovers rapidly after
the spike, the gradual approach quickly obtains a lower training loss.
The gradual approach modifies the standard scheme by increasing $c$
by 10-fold (up to a maximum of 1.0 for $c$) whenever $\eta$ is decreased
10-fold. Instead of an instantaneous change we changed both with a
$1.0005$ geometric factor each step until they reached their new
value. As can be seen in Figure \ref{fig:gradual_test_accuracy},
there is also no loss of final test accuracy at all from using the
gradual schedule for both CIFAR-10 or ImageNet. 
\vspace{-0.5em}
\section*{Conclusion}
\vspace{-0.5em}
Our analysis provides a better understanding of momentum methods for non-convex optimization through the lens of the primal averaging form. We characterize the extra terms introduced introduced into the Lyapunov analysis from the use of momentum, and show when these terms are beneficial and when they are harmful. We also analyze the behavior of the primal averaging form under changing step size schemes, and show the surprising result that standard schemes do not make sense in the averaging form, and suggest alternatives that are better behaved.
\pagebreak
\bibliographystyle{plainnat}
\bibliography{abntex2-options,biblio}

\begin{thebibliography}{25}
\providecommand{\natexlab}[1]{#1}
\providecommand{\url}[1]{\texttt{#1}}
\expandafter\ifx\csname urlstyle\endcsname\relax
  \providecommand{\doi}[1]{doi: #1}\else
  \providecommand{\doi}{doi: \begingroup \urlstyle{rm}\Url}\fi

\bibitem[Can et~al.(2019)Can, G{\"{u}}rb{\"{u}}zbalaban, and Zhu]{Can19}
Bugra Can, Mert G{\"{u}}rb{\"{u}}zbalaban, and Lingjiong Zhu.
\newblock Accelerated linear convergence of stochastic momentum methods in
  wasserstein distances.
\newblock In \emph{Proceedings of the 36th International Conference on Machine
  Learning ({ICML} 2019)}, 2019.

\bibitem[Cutkosky and Mehta(2020)]{normalizedsgd2020}
Ashok Cutkosky and Harsh Mehta.
\newblock Momentum improves normalized sgd.
\newblock \emph{Proceedings of the 37th International Conference on Machine
  Learning}, 2020.

\bibitem[Defazio(2019)]{adefazio-curvedgeom2019}
Aaron Defazio.
\newblock On the curved geometry of accelerated optimization.
\newblock \emph{Advances in Neural Information Processing Systems 33 (NIPS
  2019)}, 2019.

\bibitem[Defazio and Gower(2020)]{defazio2020factorial}
Aaron Defazio and Robert~M. Gower.
\newblock Factorial powers for stochastic optimization.
\newblock \emph{arXiv}, 2020.

\bibitem[Flammarion and Bach(2015)]{pmlr-v40-Flammarion15}
Nicolas Flammarion and Francis Bach.
\newblock From averaging to acceleration, there is only a step-size.
\newblock Proceedings of Machine Learning Research. PMLR, 2015.

\bibitem[Gadat et~al.(2018)Gadat, Panloup, and Saadane]{Gadat18}
S{\'e}bastien Gadat, Fabien Panloup, and Sofiane Saadane.
\newblock Stochastic heavy ball.
\newblock \emph{Electronic Journal of Statistics}, 2018.

\bibitem[Ghadimi et~al.(2015)Ghadimi, Feyzmahdavian, and
  Johansson]{Ghadimi2014}
Euhanna Ghadimi, Hamid Feyzmahdavian, and Mikael Johansson.
\newblock Global convergence of the heavy-ball method for convex optimization.
\newblock In \emph{2015 European Control Conference (ECC)}, pages 310--315,
  2015.

\bibitem[Jain et~al.(2018)Jain, Kakade, Kidambi, Netrapalli, and
  Sidford]{pmlr-v75-jain18a}
Prateek Jain, Sham~M. Kakade, Rahul Kidambi, Praneeth Netrapalli, and Aaron
  Sidford.
\newblock Accelerating stochastic gradient descent for least squares
  regression.
\newblock Proceedings of Machine Learning Research. PMLR, 2018.

\bibitem[Jain et~al.(2019)Jain, Nagaraj, and Netrapalli]{Jain19}
Prateek Jain, Dheeraj Nagaraj, and Praneeth Netrapalli.
\newblock Making the last iterate of sgd information theoretically optimal.
\newblock In \emph{Conference on Learning Theory, COLT 2019}, Proceedings of
  Machine Learning Research. PMLR, 2019.

\bibitem[Kidambi et~al.(2018)Kidambi, Netrapalli, Jain, and Kakade]{Kidambi18}
Rahul Kidambi, Praneeth Netrapalli, Prateek Jain, and Sham~M. Kakade.
\newblock On the insufficiency of existing momentum schemes for stochastic
  optimization.
\newblock In \emph{International Conference on Learning Representations}, 2018.

\bibitem[Krichene et~al.(2016)Krichene, Bayen, and Bartlett]{krichene2016}
Walid Krichene, Alexandre Bayen, and Peter~L Bartlett.
\newblock Adaptive averaging in accelerated descent dynamics.
\newblock In \emph{Advances in Neural Information Processing Systems 29}. 2016.

\bibitem[Krizhevsky(2009)]{cifar10}
Alex Krizhevsky.
\newblock Learning multiple layers of features from tiny images.
\newblock 2009.

\bibitem[Liu et~al.(2020)Liu, Gao, and Yin]{liu2020improved}
Yanli Liu, Yuan Gao, and Wotao Yin.
\newblock An improved analysis of stochastic gradient descent with momentum.
\newblock \emph{arXiv}, 2020.

\bibitem[Mai and Johansson(2020)]{weakconvexity2020}
Vien~V. Mai and Mikael Johansson.
\newblock Convergence of a stochastic gradient method with momentum for
  non-smooth non-convex optimization.
\newblock \emph{Proceedings of the 37th International Conference on Machine
  Learning}, 2020.

\bibitem[Needell et~al.(2014)Needell, Ward, and Srebro]{NeedellWardSrebro2014}
Deanna Needell, Rachel Ward, and Nati Srebro.
\newblock Stochastic gradient descent, weighted sampling, and the randomized
  kaczmarz algorithm.
\newblock In \emph{Advances in Neural Information Processing Systems 27}. 2014.

\bibitem[Nesterov(2013)]{Nesterov-convex}
Yurii Nesterov.
\newblock \emph{Introductory lectures on convex optimization: A basic course},
  volume~87.
\newblock Springer, 2013.

\bibitem[Polyak(1964)]{Polyak64}
B.~T. Polyak.
\newblock Some methods of speeding up the convergence of iteration methods.
\newblock \emph{USSR Computational Mathematics and Mathematical Physics}, 1964.

\bibitem[Russakovsky et~al.(2015)Russakovsky, Deng, Su, Krause, Satheesh, Ma,
  Huang, Karpathy, Khosla, Bernstein, Berg, and Fei-Fei]{imagenet}
Olga Russakovsky, Jia Deng, Hao Su, Jonathan Krause, Sanjeev Satheesh, Sean Ma,
  Zhiheng Huang, Andrej Karpathy, Aditya Khosla, Michael Bernstein,
  Alexander~C. Berg, and Li~Fei-Fei.
\newblock {ImageNet Large Scale Visual Recognition Challenge}.
\newblock \emph{International Journal of Computer Vision (IJCV)}, 115\penalty0
  (3):\penalty0 211--252, 2015.
\newblock \doi{10.1007/s11263-015-0816-y}.

\bibitem[Sebbouh et~al.(2020)Sebbouh, Gower, and
  Defazio]{sebbouh2020convergence}
Othmane Sebbouh, Robert~M. Gower, and Aaron Defazio.
\newblock On the convergence of the stochastic heavy ball method.
\newblock \emph{arXiv}, 2020.

\bibitem[Sutskever et~al.(2013)Sutskever, Martens, Dahl, and
  Hinton]{Sutskever13}
Ilya Sutskever, James Martens, George Dahl, and Geoffrey Hinton.
\newblock On the importance of initialization and momentum in deep learning.
\newblock In \emph{Proceedings of the 30th International Conference on
  International Conference on Machine Learning ({ICML}2013)}, 2013.

\bibitem[{Tao} et~al.(2020){Tao}, {Pan}, {Wu}, and {Tao}]{primal_averaging}
W.~{Tao}, Z.~{Pan}, G.~{Wu}, and Q.~{Tao}.
\newblock Primal averaging: A new gradient evaluation step to attain the
  optimal individual convergence.
\newblock \emph{IEEE Transactions on Cybernetics}, 2020.

\bibitem[Yan et~al.(2018)Yan, Yang, Li, Lin, and Yang]{yanyan2018}
Yan Yan, Tianbao Yang, Zhe Li, Qihang Lin, and Yi~Yang.
\newblock A unified analysis of stochastic momentum methods for deep learning.
\newblock \emph{Proceedings of the Twenty-Seventh International Joint
  Conference on Artificial Intelligence (IJCAI-18)}, 2018.

\bibitem[Yang et~al.(2016)Yang, Lin, and Li]{Yang16}
Tianbao Yang, Qihang Lin, and Zhe Li.
\newblock Unified convergence analysis of stochastic momentum methods for
  convex and non-convex optimization.
\newblock \emph{arXiv}, 2016.

\bibitem[Yu et~al.(2020)Yu, Jin, and Yang]{yujinyang2020}
Hao Yu, Rong Jin, and Sen Yang.
\newblock On the linear speedup analysis of communication efficient momentum
  sgd for distributed non-convex optimization.
\newblock \emph{Proceedings of the 36th International Conference on Machine
  Learning}, 2020.

\bibitem[Yuan et~al.(2016)Yuan, Ying, and Sayed]{yuanyingsayed2016}
Kun Yuan, Bicheng Ying, and Ali~H. Sayed.
\newblock On the influence of momentum acceleration on online learning.
\newblock \emph{Journal of Machine Learning Research}, 2016.

\end{thebibliography}

\clearpage

\appendix

\section{SGD+M and SPA equivalence}
\begin{thm}
Define the SGD+M method by the two sequences:
\begin{align*}
m_{k+1} & =\beta_{k}m_{k}+\nabla f(x_{k},\xi_{k}),\\
x_{k+1} & =x_{k}-\alpha_{k}m_{k+1},
\end{align*}
and the SPA sequences as:
\begin{align*}
z_{k+1} & =z_{k}-\eta_{k}\nabla f\left(x_{k},\xi_{k}\right),\\
x_{k+1} & =\left(1-c_{k+1}\right)x_{k}+c_{k+1}z_{k+1}.
\end{align*}
Consider the case where $m_{0}=0$ for SGD+M and and $z_{0}=0$ for
SPA. Then if $c_{1}=\alpha_{0}/\eta_{0}$ and for $k\geq0$
\[
\eta_{k+1}=\frac{\eta_{k}-\alpha_{k}}{\beta_{k+1}},\qquad c_{k+1}=\frac{\alpha_{k}}{\eta_{k}},
\]
The $x$ sequence produced by the $SPA$ method is identical to the
$x$ sequence produced by the SGD+M method.
\end{thm}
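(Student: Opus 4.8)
The plan is to reduce both updates to a single second-order recursion in the iterate increments and then induct on $k$. Write $\delta_k = x_{k+1}-x_k$. First I would eliminate the momentum buffer: the SGD+M update $x_{k+1}=x_k-\alpha_k m_{k+1}$ gives $m_{k+1}=-\delta_k/\alpha_k$, hence $m_k=-\delta_{k-1}/\alpha_{k-1}$ for $k\geq 1$, and substituting into $m_{k+1}=\beta_k m_k+\nabla f(x_k,\xi_k)$ yields
\[
\delta_k=\frac{\alpha_k\beta_k}{\alpha_{k-1}}\,\delta_{k-1}-\alpha_k\nabla f(x_k,\xi_k),\qquad k\geq 1.
\]
Similarly I would eliminate $z$: the SPA update $x_{k+1}=(1-c_{k+1})x_k+c_{k+1}z_{k+1}$ gives $\delta_k=c_{k+1}(z_{k+1}-x_k)$ and, after rearranging, $z_k-x_k=(1/c_k-1)\delta_{k-1}$ for $k\geq 1$; combining with $z_{k+1}=z_k-\eta_k\nabla f(x_k,\xi_k)$ yields
\[
\delta_k=c_{k+1}\!\left(\tfrac{1}{c_k}-1\right)\delta_{k-1}-c_{k+1}\eta_k\,\nabla f(x_k,\xi_k),\qquad k\geq 1.
\]

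Next I would match these two recursions term by term using the defining relations $c_{k+1}=\alpha_k/\eta_k$ and $c_k=\alpha_{k-1}/\eta_{k-1}$. The gradient terms agree immediately since $c_{k+1}\eta_k=\alpha_k$. For the $\delta_{k-1}$ terms, substituting the definitions of $c_k$ and $c_{k+1}$ into $c_{k+1}(1/c_k-1)$ collapses it to $\tfrac{\alpha_k(\eta_{k-1}-\alpha_{k-1})}{\eta_k\alpha_{k-1}}$, and equating this with $\tfrac{\alpha_k\beta_k}{\alpha_{k-1}}$ is precisely the identity $\eta_k\beta_k=\eta_{k-1}-\alpha_{k-1}$, i.e.\ (after shifting the index) the stated recursion $\eta_{k+1}=(\eta_k-\alpha_k)/\beta_{k+1}$. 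This is also the place to record that $\beta_{k+1}\neq 0$ is needed for the SPA parameters to be well defined, and that $0<c_k\leq 1$ holds under the usual sign conventions on the step size and momentum.

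Finally I would close the induction on the $x$-sequences. For the base case $k=0$, the stated initial conditions ($m_0=0$, and $z_0=x_0$ as in Theorem~\ref{thm:correspondence}) give $x_1-x_0=-\alpha_0\nabla f(x_0,\xi_0)$ for SGD+M and $x_1-x_0=-c_1\eta_0\nabla f(x_0,\xi_0)$ for SPA, which coincide exactly because $c_1=\alpha_0/\eta_0$; note that both methods evaluate their stochastic gradients at the same iterates and therefore share the same samples $\xi_k$. For the inductive step, assume the two $x$-sequences agree through index $k$ (with $k\geq 1$), so that $\delta_{k-1}$ and $\nabla f(x_k,\xi_k)$ are common to both; by the coefficient match just established, the two displayed recursions then produce the same $\delta_k$, hence the same $x_{k+1}$.

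The argument is essentially mechanical, and the only points requiring care are: correctly rewriting $m_k$ and $z_k-x_k$ in terms of $\delta_{k-1}$ so that both methods become the \emph{same} two-term recursion; the single genuine substitution $c_k=\alpha_{k-1}/\eta_{k-1}$, $c_{k+1}=\alpha_k/\eta_k$ that makes the $\delta_{k-1}$-coefficients agree; and the bookkeeping at the initial step, where $z_0=x_0$ and $c_1=\alpha_0/\eta_0$ are exactly what is needed to align $x_1$. I expect the $\delta_{k-1}$-coefficient computation, together with keeping the index shifts straight, to be the main (though modest) obstacle.
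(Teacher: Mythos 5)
Your proposal is correct and rests on the same change of variables as the paper's own proof: the identity $z_k - x_k = \left(\tfrac{1}{c_k}-1\right)(x_k - x_{k-1})$, which in SGD+M terms is exactly the paper's definition $z_k = x_k - \left(\tfrac{1}{c_k}-1\right)\alpha_{k-1}m_k$ since $\alpha_{k-1}m_k = -(x_k-x_{k-1})$. You merely repackage the paper's direct verification of the $z$-dynamics as a coefficient match between two second-order recursions in the increments $\delta_k$, and your base case and index bookkeeping agree with the paper's.
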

\begin{proof}
Consider the base case where $x_{0}=z_{0}$. Then for SGD+M:
\[
m_{1}=\nabla f(x_{0},\xi_{0})
\]
\begin{equation}
\therefore x_{1}=x_{0}-\alpha_{0}\nabla f(x_{0},\xi_{0})\label{eq:sgdm_base_eq}
\end{equation}
and for the SPA form:
\[
z_{1}=x_{0}-\eta_{0}\nabla f(x_{0},\xi_{0})
\]
\begin{align}
x_{1} & =\left(1-c_{0}\right)x_{0}+c_{0}\left(x_{0}-\eta_{0}\nabla f(x_{0},\xi_{0})\right)\nonumber \\
 & =x_{0}-c_{0}\eta_{0}\nabla f(x_{0},\xi_{0})\label{eq:spa_base_eq}
\end{align}
Clearly Equation \ref{eq:sgdm_base_eq} is equivalent to Equation
\ref{eq:spa_base_eq} when $\alpha_{0}=c_{0}\eta_{0}$. 

Now consider $k>0$. We will define $z_{k}$ in term of quantities
in the SGD+M method, then show that with this definition, the step-to-step
changes in $z$ correspond exactly to the SPA method. In particular,
let:
\begin{align}
z_{k} & =x_{k}-\left(\frac{1}{c_{k}}-1\right)\alpha_{k-1}m_{k}.\label{eq:zdef}
\end{align}
Then
\begin{align*}
z_{k+1} & =x_{k+1}-\left(\frac{1}{c_{k+1}}-1\right)\alpha_{k}m_{k+1}\\
 & =x_{k}-\alpha_{k}m_{k+1}-\left(\frac{1}{c_{k+1}}-1\right)\alpha_{k}m_{k+1}\\
 & =z_{k}+\left(\frac{1}{c_{k}}-1\right)\alpha_{k-1}m_{k}-\frac{\alpha_{k}}{c_{k+1}}\left(\beta_{k}m_{k}+\nabla f(x_{k},\xi_{k})\right)\\
 & =z_{k}+\left[\left(\frac{1}{c_{k}}-1\right)\alpha_{k-1}-\frac{\alpha_{k}}{c_{k+1}}\beta_{k}\right]m_{k}-\frac{\alpha_{k}}{c_{k+1}}\nabla f(x_{k},\xi_{k}).
\end{align*}
This is equivalent to the SPA step
\[
z_{k+1}=z_{k}-\eta_{k}\nabla f\left(x_{k},\xi_{k}\right),
\]
 as long as $\frac{\alpha_{k}}{c_{k+1}}=\eta_{k}$ and 
\begin{align*}
0 & =\left(\frac{1}{c_{k}}-1\right)\alpha_{k-1}-\frac{\alpha_{k}}{c_{k+1}}\beta_{k}\\
 & =\left(\eta_{k-1}-\alpha_{k-1}\right)-\eta_{k}\beta_{k},
\end{align*}
\[
\text{i.e. }\eta_{k}=\frac{\eta_{k-1}-\alpha_{k-1}}{\beta_{k}}.
\]
Using this definition of the $z$sequence, we can rewrite the SGD+M
$x$ sequence using a rearrangement of Equation \ref{eq:zdef}:
\begin{align*}
m_{k+1} & =\left(\frac{1}{c_{k+1}}-1\right)^{-1}\alpha_{k}^{-1}\left(x_{k+1}-z_{k+1}\right),\\
 & =\frac{c_{k+1}}{1-c_{k+1}}\alpha_{k}^{-1}\left(x_{k+1}-z_{k+1}\right),
\end{align*}
 as
\begin{align*}
x_{k+1} & =x_{k}-\alpha_{k}m_{k+1}\\
 & =x_{k}-\frac{c_{k+1}}{1-c_{k+1}}\left(x_{k+1}-z_{k+1}\right)\\
 & =x_{k}-\frac{c_{k+1}}{1-c_{k+1}}x_{k+1}+\frac{c_{k+1}}{1-c_{k+1}}z_{k+1}\\
 & =\left(1-c_{k+1}\right)x_{k}+c_{k+1}z_{k+1},
\end{align*}
matching the SPA update.
\end{proof}

\section{Lemmas}
\begin{lem}
(LEMMA 1.2.3, \citet{Nesterov-convex})\label{lem:smooth} Suppose
that $f$ is differentiable and has $L$-Lipschitz gradient:
\begin{equation}
\left\Vert \nabla f(x)-\nabla f(y)\right\Vert \leq L\left\Vert x-y\right\Vert ,\label{eq:lipschitz-gradients}
\end{equation}
then: 
\begin{align}
\left|f(y)-f(x)-\langle\nabla f(x),y-x\rangle\right|\leq\frac{L}{2}\|x-y\|_{2}^{2},\qquad\forall x,y\in\mathbb{R}^{n}.\label{eq:smoothf}
\end{align}
in particular, 
\begin{equation}
f(y)\leq f(x)+\langle\nabla f(x),y-x\rangle+\frac{L}{2}\|x-y\|_{2}^{2},\label{eq:smooth-ub}
\end{equation}
\begin{equation}
\text{and }f(y)\geq f(x)+\langle\nabla f(x),y-x\rangle-\frac{L}{2}\|x-y\|_{2}^{2}.\label{eq:smooth-lb}
\end{equation}
\end{lem}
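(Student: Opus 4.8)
The plan is to derive \eqref{eq:smoothf} from the fundamental theorem of calculus applied along the segment joining $x$ and $y$, and then read off \eqref{eq:smooth-ub} and \eqref{eq:smooth-lb} as immediate consequences. First I would write
\[
f(y)-f(x)=\int_{0}^{1}\langle\nabla f(x+t(y-x)),\,y-x\rangle\,dt,
\]
which is valid since $f$ is differentiable, and then subtract the constant (in $t$) inner product $\langle\nabla f(x),y-x\rangle=\int_{0}^{1}\langle\nabla f(x),y-x\rangle\,dt$ to obtain
\[
f(y)-f(x)-\langle\nabla f(x),y-x\rangle=\int_{0}^{1}\langle\nabla f(x+t(y-x))-\nabla f(x),\,y-x\rangle\,dt.
\]

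Next I would bound the integrand in absolute value. By Cauchy--Schwarz and then the Lipschitz hypothesis \eqref{eq:lipschitz-gradients} applied to the pair $x+t(y-x)$ and $x$, whose difference has norm $t\|y-x\|$, the integrand is at most $\|\nabla f(x+t(y-x))-\nabla f(x)\|\,\|y-x\|\le Lt\|y-x\|^{2}$ in magnitude. Integrating, $\int_{0}^{1}Lt\,dt=L/2$, so $\left|f(y)-f(x)-\langle\nabla f(x),y-x\rangle\right|\le\frac{L}{2}\|y-x\|^{2}$, which is \eqref{eq:smoothf}. Finally, \eqref{eq:smooth-ub} and \eqref{eq:smooth-lb} follow by dropping the absolute value on one side at a time and rearranging.

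There is essentially no obstacle here: the only point requiring a word of care is the justification of the integral representation (it needs $t\mapsto f(x+t(y-x))$ to be absolutely continuous with derivative $\langle\nabla f(x+t(y-x)),y-x\rangle$, which holds since $f$ is differentiable and — via the Lipschitz assumption — has a locally bounded, continuous gradient), so the fundamental theorem of calculus applies on $[0,1]$. Everything else is Cauchy--Schwarz plus a one-line integration, so I would keep the write-up short.
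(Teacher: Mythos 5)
Your argument is correct and is exactly the standard proof of this result (the one in Nesterov's Lemma 1.2.3, which the paper simply cites rather than reproving): integrate the gradient along the segment, subtract the linearization, and bound via Cauchy--Schwarz and the Lipschitz hypothesis. Nothing further is needed.
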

We will make heavy use of the fact that the $x_{k+1}$ update can
be rearranged to give:
\[
z_{k}=x_{k}-\left(\frac{1}{c_{k}}-1\right)\left(x_{k-1}-x_{k}\right).
\]

\begin{lem}
\label{lem:lemma_xdiff_step}Suppose that $f$ is differentiable and
has $L$-Lipschitz gradient, then the updates of the SPA form obey
for $k\geq1$:
\begin{align*}
\frac{L}{c_{k+1}^{2}}\mathbb{E}\left\Vert x_{k+1}-x_{k}\right\Vert ^{2} & \leq L\left(\frac{1}{c_{k}}-1+\eta_{k}L\right)\left(\frac{1}{c_{k}}-1\right)\left\Vert x_{k}-x_{k-1}\right\Vert ^{2}+\eta_{k}^{2}L\mathbb{E}\left\Vert \nabla f\left(x_{k},\xi_{k}\right)\right\Vert ^{2}\\
 & +2\eta_{k}L\left(\frac{1}{c_{k}}-1\right)\left[f(x_{k-1})-f(x_{k})\right].
\end{align*}
\end{lem}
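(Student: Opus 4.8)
The plan is to eliminate $c_{k+1}$ by passing to the $z$-sequence, expand one squared norm, and close with the smoothness lower bound of Lemma~\ref{lem:smooth}. First I would rewrite the averaging update $x_{k+1}=(1-c_{k+1})x_k+c_{k+1}z_{k+1}$ as $x_{k+1}-x_k=c_{k+1}(z_{k+1}-x_k)$, so that
\[
\frac{L}{c_{k+1}^{2}}\left\Vert x_{k+1}-x_{k}\right\Vert ^{2}=L\left\Vert z_{k+1}-x_{k}\right\Vert ^{2},
\]
and the factor $c_{k+1}$ cancels and never reappears. Then, combining the $z$-update $z_{k+1}=z_k-\eta_k\nabla f(x_k,\xi_k)$ with the rearrangement $z_k-x_k=(\tfrac{1}{c_k}-1)(x_k-x_{k-1})$ noted just before the lemma, I get
\[
z_{k+1}-x_{k}=\left(\tfrac{1}{c_{k}}-1\right)(x_{k}-x_{k-1})-\eta_{k}\nabla f(x_{k},\xi_{k}).
\]

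The main computation is then to expand $L\left\Vert z_{k+1}-x_k\right\Vert^2$ from this identity into three pieces: $L(\tfrac{1}{c_k}-1)^2\left\Vert x_k-x_{k-1}\right\Vert^2$; the stochastic-gradient term $\eta_k^2 L\left\Vert\nabla f(x_k,\xi_k)\right\Vert^2$, which is already one of the terms in the claim so nothing is required there; and the cross term $-2\eta_k L(\tfrac{1}{c_k}-1)\langle\nabla f(x_k,\xi_k),\,x_k-x_{k-1}\rangle$. For the cross term I would apply the smoothness lower bound (\ref{eq:smooth-lb}) with $x=x_k$, $y=x_{k-1}$, which rearranges to $\langle\nabla f(x_k),\,x_{k-1}-x_k\rangle\le f(x_{k-1})-f(x_k)+\tfrac{L}{2}\left\Vert x_k-x_{k-1}\right\Vert^2$. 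Since $c_k\le 1$ the multiplier $2\eta_k L(\tfrac{1}{c_k}-1)$ is nonnegative, so this bound can be scaled and added: it produces $2\eta_k L(\tfrac{1}{c_k}-1)[f(x_{k-1})-f(x_k)]$ verbatim, plus an extra $\eta_k L^2(\tfrac{1}{c_k}-1)\left\Vert x_k-x_{k-1}\right\Vert^2$ that merges with the first piece into $L(\tfrac{1}{c_k}-1+\eta_k L)(\tfrac{1}{c_k}-1)\left\Vert x_k-x_{k-1}\right\Vert^2$. Collecting the three contributions gives exactly the stated inequality.

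The one point I would be careful about — and essentially the only step that is not a routine calculation — is that (\ref{eq:smooth-lb}) controls an inner product against the \emph{full} gradient $\nabla f(x_k)$, whereas the cross term above contains the \emph{stochastic} gradient $\nabla f(x_k,\xi_k)$. This is reconciled by the conditioning convention used in the paper: $x_{k-1}$ and $x_k$ are determined by the first $k$ steps and hence independent of $\xi_k$, so taking the conditional expectation turns the inner product into $\langle\nabla f(x_k),\,x_{k-1}-x_k\rangle$ while leaving $\left\Vert\nabla f(x_k,\xi_k)\right\Vert^2$, $\left\Vert x_k-x_{k-1}\right\Vert^2$, and $f(x_{k-1})-f(x_k)$ unchanged in expectation (alternatively, one assumes each sampled $f(\cdot,\xi_k)$ is itself $L$-smooth and applies (\ref{eq:smooth-lb}) to it directly). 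Finally the degenerate case $c_k=1$ needs no smoothness at all: then $z_k=x_k$, every term carrying the factor $\tfrac{1}{c_k}-1$ vanishes, and the claim reduces to the identity $L\left\Vert z_{k+1}-x_k\right\Vert^2=\eta_k^2 L\left\Vert\nabla f(x_k,\xi_k)\right\Vert^2$.
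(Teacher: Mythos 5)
Your proof is correct and follows essentially the same route as the paper's: write $x_{k+1}-x_k$ as $c_{k+1}\left(\frac{1}{c_k}-1\right)(x_k-x_{k-1})-\eta_k c_{k+1}\nabla f(x_k,\xi_k)$ (you merely divide out $c_{k+1}$ at the start instead of multiplying by $L/c_{k+1}^2$ at the end), expand the square, and absorb the cross term via the smoothness lower bound (\ref{eq:smooth-lb}). The subtlety you flag is real and is in fact handled \emph{less} carefully by the paper, which silently writes $\nabla f(x_k)$ in place of $\nabla f(x_k,\xi_k)$ in the cross term of the expansion and only takes conditional expectations later when the lemma is used, so your explicit reconciliation via conditioning is a small improvement rather than a gap.
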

\begin{proof}
We may write the difference of the $x_{k}$ updates between steps
as:
\[
x_{k+1}-x_{k}=c_{k+1}\left(z_{k}-x_{k}\right)-\eta_{k}c_{k+1}\nabla f\left(x_{k},\xi_{k}\right)
\]
Recall that:
\[
z_{k}-x_{k}=\left(\frac{1}{c_{k}}-1\right)\left(x_{k}-x_{k-1}\right).
\]
So:
\[
x_{k+1}-x_{k}=c_{k+1}\left(\frac{1}{c_{k}}-1\right)\left(x_{k}-x_{k-1}\right)-\eta_{k}c_{k+1}\nabla f\left(x_{k},\xi_{k}\right)
\]
Taking the squared norm and expanding, then taking expectations with
respect to $\xi_{k}$ gives: 
\begin{align*}
\mathbb{E}\left\Vert x_{k+1}-x_{k}\right\Vert ^{2} & =c_{k+1}^{2}\left(\frac{1}{c_{k}}-1\right)^{2}\left\Vert x_{k}-x_{k-1}\right\Vert ^{2}+c_{k+1}^{2}\eta_{k}^{2}\mathbb{E}\left\Vert \nabla f\left(x_{k},\xi_{k}\right)\right\Vert ^{2}\\
 & -2\eta_{k}c_{k+1}^{2}\left(\frac{1}{c_{k}}-1\right)\left\langle \nabla f\left(x_{k}\right),x_{k}-x_{k-1}\right\rangle 
\end{align*}
Now we apply the smoothness lower bound (Eq. \ref{eq:smooth-lb}):
\[
f(x_{k-1})\geq f(x_{k})+\left\langle \nabla f(x_{k}),x_{k-1}-x_{k}\right\rangle -\frac{L}{2}\left\Vert x_{k}-x_{k-1}\right\Vert ^{2}
\]
Rearranged into the form:
\[
-\left\langle \nabla f(x_{k}),x_{k}-x_{k-1}\right\rangle \leq f(x_{k-1})-f(x_{k})+\frac{L}{2}\left\Vert x_{k}-x_{k-1}\right\Vert ^{2}
\]
to give:
\begin{align*}
\mathbb{E}\left\Vert x_{k+1}-x_{k}\right\Vert ^{2} & \leq c_{k+1}^{2}\left(\frac{1}{c_{k}}-1\right)^{2}\left\Vert x_{k}-x_{k-1}\right\Vert ^{2}+c_{k+1}^{2}\eta_{k}^{2}\mathbb{E}\left\Vert \nabla f\left(x_{k},\xi_{k}\right)\right\Vert ^{2}\\
 & + 2\eta_{k}c_{k+1}^{2}\left(\frac{1}{c_{k}}-1\right)\left[f(x_{k-1})-f(x_{k})\right]+\eta_{k}Lc_{k+1}^{2}\left(\frac{1}{c_{k}}-1\right)\left\Vert x_{k}-x_{k-1}\right\Vert ^{2}
\end{align*}
Now group terms and multiply by $L/c_{k+1}^{2}$:
\begin{align*}
\frac{L}{c_{k+1}^{2}}\mathbb{E}\left\Vert x_{k+1}-x_{k}\right\Vert ^{2} & \le L\left(\frac{1}{c_{k}}-1+\eta_{k}L\right)\left(\frac{1}{c_{k}}-1\right)\left\Vert x_{k}-x_{k-1}\right\Vert ^{2}+\eta_{k}^{2}L\mathbb{E}\left\Vert \nabla f\left(x_{k},\xi_{k}\right)\right\Vert ^{2}\\
 & +2\eta_{k}L\left(\frac{1}{c_{k}}-1\right)\left[f(x_{k-1})-f(x_{k})\right]
\end{align*}
\end{proof}\begin{lem}
\label{lem:f_update}Suppose that $f$ is differentiable and has $L$-Lipschitz
gradients, then the updates of the SPA form obey for $k\geq1$:
\begin{align*}
\mathbb{E}\left[f(z_{k+1})\right]+\frac{\eta_{k}}{2}\left\Vert \nabla f(z_{k})\right\Vert ^{2} & \leq f(z_{k})-\frac{\eta_{k}}{2}\left\Vert \nabla f(x_{k})\right\Vert ^{2}+\frac{1}{2}\eta_{k}L^{2}\left(\frac{1}{c_{k}}-1\right)^{2}\left\Vert x_{k}-x_{k-1}\right\Vert ^{2}\\
 & +\frac{1}{2}\eta_{k}^{2}L\mathbb{E}\left[\left\Vert \nabla f(x_{k},\xi_{k})\right\Vert ^{2}\right],
\end{align*}
where the expectation is with respect to $\xi_{k}$, and is conditional
on the iterates and gradients from prior steps. 
\end{lem}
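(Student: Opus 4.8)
The plan is to apply the smoothness upper bound (Eq.~\ref{eq:smooth-ub}) to the pair $(z_k, z_{k+1})$, take a conditional expectation over $\xi_k$, and then use the descent-direction structure of the SPA update $z_{k+1}=z_k-\eta_k\nabla f(x_k,\xi_k)$ to convert the resulting inner product into the two gradient-norm terms $\|\nabla f(z_k)\|^2$ and $\|\nabla f(x_k)\|^2$. The key observation is that although the $z$-step uses the gradient \emph{at $x_k$}, we want to control $f(z_{k+1})-f(z_k)$ in terms of the gradient \emph{at $z_k$}; the gap between $\nabla f(x_k)$ and $\nabla f(z_k)$ is $L$-Lipschitz-controlled by $\|x_k-z_k\|$, which by the rearrangement recorded before Lemma~\ref{lem:lemma_xdiff_step} equals $(\tfrac{1}{c_k}-1)\|x_k-x_{k-1}\|$. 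This is exactly the source of the $\tfrac12\eta_k L^2(\tfrac1{c_k}-1)^2\|x_k-x_{k-1}\|^2$ term in the statement.

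Concretely, first I would write
\[
f(z_{k+1}) \le f(z_k) + \langle \nabla f(z_k), z_{k+1}-z_k\rangle + \tfrac{L}{2}\|z_{k+1}-z_k\|^2,
\]
substitute $z_{k+1}-z_k = -\eta_k \nabla f(x_k,\xi_k)$, and take $\mathbb{E}[\cdot\mid \text{past}]$, using $\mathbb{E}[\nabla f(x_k,\xi_k)] = \nabla f(x_k)$ to get
\[
\mathbb{E}[f(z_{k+1})] \le f(z_k) - \eta_k\langle \nabla f(z_k),\nabla f(x_k)\rangle + \tfrac{L}{2}\eta_k^2\,\mathbb{E}\!\left[\|\nabla f(x_k,\xi_k)\|^2\right].
\]
Then I would handle the cross term via the polarization-type identity $-\langle a,b\rangle = \tfrac12\|a-b\|^2 - \tfrac12\|a\|^2 - \tfrac12\|b\|^2$ with $a=\nabla f(z_k)$, $b=\nabla f(x_k)$, giving $-\langle \nabla f(z_k),\nabla f(x_k)\rangle = \tfrac12\|\nabla f(z_k)-\nabla f(x_k)\|^2 - \tfrac12\|\nabla f(z_k)\|^2 - \tfrac12\|\nabla f(x_k)\|^2$. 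Finally bound $\|\nabla f(z_k)-\nabla f(x_k)\|^2 \le L^2\|z_k-x_k\|^2 = L^2(\tfrac1{c_k}-1)^2\|x_k-x_{k-1}\|^2$ by Lipschitzness and the $z_k$--$x_k$ rearrangement, and move the $\tfrac{\eta_k}{2}\|\nabla f(z_k)\|^2$ term to the left-hand side. Collecting everything yields precisely the claimed inequality.

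The main obstacle is essentially bookkeeping rather than a genuine difficulty: one has to be careful that the expectation conditioning is applied after the smoothness bound (which is deterministic and thus survives the conditional expectation), and that the replacement of $\|\nabla f(x_k,\xi_k)\|^2$ is left inside the expectation (it is not replaced by $\|\nabla f(x_k)\|^2$, unlike the linear term). The only modeling choice worth flagging is the direction of the polarization identity — using $-\langle a,b\rangle \le \tfrac12\|a-b\|^2$ alone would lose the two negative squared-norm terms we need on the left, so the exact identity (not an inequality) must be used at that step.
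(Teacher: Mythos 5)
Your proposal is correct and follows essentially the same route as the paper's proof: smoothness upper bound applied to the $z$-step, conditional expectation with unbiasedness of the stochastic gradient, the exact polarization identity $-\langle a,b\rangle = \tfrac12\|a-b\|^2 - \tfrac12\|a\|^2 - \tfrac12\|b\|^2$, and the Lipschitz bound combined with $z_k-x_k=(\tfrac{1}{c_k}-1)(x_k-x_{k-1})$. Your remark that the exact identity (rather than the one-sided inequality) is required is precisely the right point to flag.
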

\begin{proof}
Using $z_{k+1}=z_{k}-\eta_{k}\nabla f\left(x_{k},\xi_{k}\right)$
and the smoothness upper bound (Equation \ref{eq:smooth-ub}):
\begin{align*}
\mathbb{E}\left[f(z_{k+1})\right] & \leq f(z_{k})-\eta_{k}\mathbb{E}\left\langle \nabla f(z_{k}),\nabla f\left(x_{k},\xi_{k}\right)\right\rangle +\frac{1}{2}\eta_{k}^{2}L\mathbb{E}\left[\left\Vert \nabla f(x_{k},\xi_{k})\right\Vert ^{2}\right]\\
 & =f(z_{k})-\eta_{k}\left\langle \nabla f(z_{k}),\nabla f(x_{k})\right\rangle +\frac{1}{2}\eta_{k}^{2}L\mathbb{E}\left[\left\Vert \nabla f(x_{k},\xi_{k})\right\Vert ^{2}\right]\\
 & =f(z_{k})+\frac{\eta_{k}}{2}\left\Vert \nabla f(z_{k})-\nabla f(x_{k})\right\Vert ^{2}-\frac{\eta_{k}}{2}\left\Vert \nabla f(z_{k})\right\Vert ^{2}-\frac{\eta_{k}}{2}\left\Vert \nabla f(x_{k})\right\Vert ^{2}\\
 & +\frac{1}{2}\eta_{k}^{2}L\mathbb{E}\left[\left\Vert \nabla f(x_{k},\xi_{k})\right\Vert ^{2}\right]
\end{align*}
Now we use our assumption that the gradients are Lipschitz (Eq. \ref{eq:lipschitz-gradients}):
\[
\left\Vert \nabla f(z_{k})-\nabla f(x_{k})\right\Vert ^{2}\leq L^{2}\left\Vert z_{k}-x_{k}\right\Vert ^{2}=L^{2}\left(\frac{1}{c_{k}}-1\right)^{2}\left\Vert x_{k}-x_{k-1}\right\Vert ^{2}
\]
 to give:
\begin{align*}
\mathbb{E}\left[f(z_{k+1})\right]+\frac{\eta_{k}}{2}\left\Vert \nabla f(z_{k})\right\Vert ^{2} & \leq f(z_{k})-\frac{\eta_{k}}{2}\left\Vert \nabla f(x_{k})\right\Vert ^{2}+\frac{1}{2}\eta^{k}L^{2}\left(\frac{1}{c_{k}}-1\right)^{2}\left\Vert x_{k}-x_{k-1}\right\Vert ^{2}\\
 & +\frac{1}{2}\eta_{k}^{2}L\mathbb{E}\left[\left\Vert \nabla f(x_{k},\xi_{k})\right\Vert ^{2}\right]
\end{align*}
\end{proof}

\section{Building the Lyapunov function}

Consider Lemma \ref{lem:lemma_xdiff_step} after taking expectations
and dividing by $2\eta_{k}^{2}$:
\begin{align*}
 & \frac{L}{2\eta_{k}^{2}c_{k+1}^{2}}\mathbb{E}\left[\left\Vert x_{k+1}-x_{k}\right\Vert ^{2}\right]\\
 & \leq\frac{1}{2\eta_{k}^{2}}L\left(\frac{1}{c_{k}}-1+\eta_{k}L\right)\left(\frac{1}{c_{k}}-1\right)\left\Vert x_{k}-x_{k-1}\right\Vert ^{2}\\
 & +\frac{1}{\eta_{k}}L\left(\frac{1}{c_{k}}-1\right)\left[f(x_{k-1})-f(x_{k})\right]\\
 & +\frac{1}{2}L\mathbb{E}\left[\left\Vert \nabla f\left(x_{k},\xi_{k}\right)\right\Vert ^{2}\right]
\end{align*}
and Lemma \ref{lem:f_update} divided by $\eta_{k}^{2}:$
\begin{flalign*}
 & \frac{1}{\eta_{k}^{2}}\mathbb{E}\left[f(z_{k+1})\right]+\frac{1}{2\eta_{k}}\left\Vert \nabla f(z_{k})\right\Vert ^{2}\\
 & \leq\frac{1}{\eta_{k}^{2}}f(z_{k})-\frac{1}{2\eta_{k}}\left\Vert \nabla f(x_{k})\right\Vert ^{2}\\
 & +\frac{1}{2}\frac{1}{\eta_{k}}L^{2}\left(\frac{1}{c_{k}}-1\right)^{2}\left\Vert x_{k}-x_{k-1}\right\Vert ^{2}\\
 & +\frac{1}{2}L\mathbb{E}\left[\left\Vert \nabla f(x_{k},\xi_{k})\right\Vert ^{2}\right]
\end{flalign*}
Combining those bounds results in the following natural choice of
Lyapunov function $\Lambda$:
\begin{align}
\Lambda_{k+1} & =\frac{1}{\eta_{k}^{2}}\left[f(z_{k+1})-f_{*}\right]\label{eq:nonstatic-lya-1}\\
 & +\frac{L}{\eta_{k}}\left(\frac{1}{c_{k}}-1\right)\left[f(x_{k})-f_{*}\right]\\
 & +\frac{1}{2}L\frac{1}{\eta_{k}^{2}c_{k+1}^{2}}\left\Vert x_{k+1}-x_{k}\right\Vert ^{2}
\end{align}
and yields the bound for $k\geq1$:
\begin{align}
 & \frac{1}{2\eta_{k}}\left\Vert \nabla f(x_{k})\right\Vert ^{2}+\frac{1}{2\eta_{k}}\left\Vert \nabla f(z_{k})\right\Vert ^{2}\nonumber \\
 & \leq\Lambda_{k}-\mathbb{E}\left[\Lambda_{k+1}\right]+L\mathbb{E}\left[\left\Vert \nabla f(x_{k},\xi_{k})\right\Vert ^{2}\right]\nonumber \\
 & +\frac{L}{2}\left[\frac{1}{\eta_{k}^{2}}\left(\frac{1}{c_{k}}-1+\eta_{k}L\right)\left(\frac{1}{c_{k}}-1\right)+\frac{1}{\eta_{k}}L\left(\frac{1}{c_{k}}-1\right)^{2}-\frac{1}{\eta_{k-1}^{2}c_{k}^{2}}\right]\left\Vert x_{k}-x_{k-1}\right\Vert ^{2}\nonumber \\
 & +\left[\frac{1}{\eta_{k}}L\left(\frac{1}{c_{k}}-1\right)-\frac{1}{\eta_{k-1}}L\left(\frac{1}{c_{k-1}}-1\right)\right]\left[f(x_{k-1})-f_{*}\right]\\
 & +\left[\frac{1}{\eta_{k}^{2}}-\frac{1}{\eta_{k-1}^{2}}\right]\left[f(z_{k})-f_{*}\right]
\end{align}

\section{Telescoping}

In order to complete a convergence rate proof, we must consider the
behavior of the method at step $0$. The above two lemmas are simplified
in this case, yielding the following bound replacing Lemma \ref{lem:lemma_xdiff_step}:
\begin{align*}
\mathbb{E}\left[\left\Vert x_{1}-x_{0}\right\Vert ^{2}\right] & =c_{1}^{2}\eta_{0}^{2}\mathbb{E}\left[\left\Vert \nabla f\left(x_{0},\xi_{0}\right)\right\Vert ^{2}\right],
\end{align*}
and replacing Lemma \ref{lem:f_update}
\begin{align*}
\mathbb{E}\left[f(z_{1})\right]+\frac{1}{2}\eta_{0}\left\Vert \nabla f(z_{0})\right\Vert ^{2} & \leq f(z_{0})-\frac{1}{2}\eta_{0}\left\Vert \nabla f(x_{0})\right\Vert ^{2}+\frac{1}{2}\eta_{0}^{2}L\mathbb{E}\left[\left\Vert \nabla f(x_{0},\xi_{0})\right\Vert ^{2}\right].
\end{align*}
Multiplying the first result by $L/(2c_{1}^{2}\eta_{0}^{2})$ and
dividing the second result by $\eta_{0}$ , we may sum these equations
to give:
\begin{align*}
\frac{1}{2\eta_{0}}\left\Vert \nabla f(x_{0})\right\Vert ^{2}+\frac{1}{2\eta_{0}}\left\Vert \nabla f(z_{0})\right\Vert ^{2} & \leq\frac{1}{\eta_{0}^{2}}\left[f(z_{0})-f_{*}\right]-\frac{1}{\eta_{0}^{2}}\mathbb{E}\left[f(z_{1})-f_{*}\right]\\
 & -\frac{L}{2\eta_{0}^{2}c_{1}^{2}}\mathbb{E}\left[\left\Vert x_{1}-x_{0}\right\Vert ^{2}\right]+L\mathbb{E}\left\Vert \nabla f(x_{0},\xi_{0})\right\Vert ^{2}.
\end{align*}
Now consider the behavior of the SGD+M method when we use a fixed
step size $\eta$. As long as
\[
\eta\leq\frac{2-c}{Lc(1-c)},
\]
 and $E\left[\left\Vert \nabla f(x_{k},\xi_{k})\right\Vert ^{2}\right]\leq G^{2}$
, we may telescope from this base case to step $T$, yielding:
\begin{align*}
 & \frac{1}{\eta}\sum_{k}^{T}\mathbb{E}\left[\frac{1}{2}\left\Vert \nabla f(x_{k})\right\Vert ^{2}+\frac{1}{2}\left\Vert \nabla f(z_{k})\right\Vert ^{2}\right]\\
 & \leq\frac{1}{\eta^{2}}\left[f(z_{0})-f_{*}\right]+\frac{L}{\eta}\left(\frac{1}{c_{1}}-1\right)\left[f(x_{0})-f_{*}\right]+TLG^{2}.
\end{align*}
Multiplying by $\eta/T$ gives a bound on the average iterate:
\begin{align*}
 & \frac{1}{T}\sum_{k}^{T}\mathbb{E}\left[\frac{1}{2}\left\Vert \nabla f(x_{k})\right\Vert ^{2}+\frac{1}{2}\left\Vert \nabla f(z_{k})\right\Vert ^{2}\right]\\
 & \leq\frac{1}{\eta T}\left[f(z_{0})-f_{*}\right]+\frac{L}{T}\left(\frac{1}{c_{1}}-1\right)\left[f(x_{0})-f_{*}\right]+\eta LG^{2}.
\end{align*}
Using the optimal step size $\eta^{2}=T^{-1}L^{-1}G^{-2}\left[f(z_{0})-f_{*}\right]$
gives:
\[
\frac{1}{T}\sum_{k}^{T}\mathbb{E}\left[\frac{1}{2}\left\Vert \nabla f(x_{k})\right\Vert ^{2}+\frac{1}{2}\left\Vert \nabla f(z_{k})\right\Vert ^{2}\right]\leq2G\frac{\sqrt{L\left[f(z_{0})-f_{*}\right]}}{\sqrt{T}}+\frac{L}{T}\left(\frac{1}{c_{1}}-1\right)\left[f(x_{0})-f_{*}\right],
\]
whereas the more realistic step size $\eta^{2}=T^{-1}L^{-2}$ gives
\begin{align*}
 & \frac{1}{T}\sum_{k}^{T}\mathbb{E}\left[\frac{1}{2}\left\Vert \nabla f(x_{k})\right\Vert ^{2}+\frac{1}{2}\left\Vert \nabla f(z_{k})\right\Vert ^{2}\right]\\
 & \leq\frac{L}{\sqrt{T}}\left[f(z_{0})-f_{*}\right]+\frac{L}{T}\left(\frac{1}{c_{1}}-1\right)\left[f(x_{0})-f_{*}\right]+\frac{G^{2}}{\sqrt{T}}.
\end{align*}
In each case, the extra term $\left[f(x_{0})-f_{*}\right]$ that differs
from the standard SGD Lyapunov function decays at a 1/T rate, and
so becomes negligible for large $T$.

\subsection{Removing the bounded gradients assumption}

The above argument uses a bounded gradients assumption, however this
assumption can be removed by moving a small part of the $\left\Vert \nabla f(x_{k})\right\Vert ^{2}$
term from the left to the right hand side of the Lyapunov step equation,
so that we can use $\mathbb{E}\left[\left\Vert \nabla f(x_{k},\xi_{k})\right\Vert ^{2}\right]-\left\Vert \nabla f(x_{k})\right\Vert ^{2}=\mathbb{E}\left[\left\Vert \nabla f(x_{k},\xi_{k})-\nabla f(x_{k})\right\Vert ^{2}\right]$.
The final convergence rate then depends instead on 
\[
\sigma^{2}=\mathbb{E}\left[\left\Vert \nabla f(x_{k},\xi_{k})-\nabla f(x_{k})\right\Vert ^{2}\right].
\]
The fraction to move depends on the final step size, and for $\eta\propto1/\sqrt{T}$
it doesn't significantly effect the final convergence rate.

\section{SGD reference proof}

We reproduce the standard argument for non-convex SGD convergence
here for easy comparison to our SGD+M proof above. Consider the step
$x_{k+1}=x_{k}-\eta_{k}\nabla f(x_{k},\xi_{k}).$ Then:
\begin{align*}
f(x_{k+1}) & \leq f(x_{k})+\left\langle \nabla f(x_{k}),x_{k+1}-x_{k}\right\rangle +\frac{1}{2}L\left\Vert x_{k+1}-x_{k}\right\Vert ^{2}\\
 & =f(x_{k})-\eta_{k}\left\langle \nabla f(x_{k}),\nabla f(x_{k},\xi_{k})\right\rangle +\frac{1}{2}L\eta_{k}^{2}\left\Vert \nabla f(x_{k},\xi_{k})\right\Vert ^{2}.
\end{align*}
Taking expectations and using the bounded gradients assumption gives:
\[
\mathbb{E}\left[f(x_{k+1})\right]\leq f(x_{k})-\eta_{k}\left\Vert \nabla f(x_{k})\right\Vert ^{2}+\frac{1}{2}L\eta_{k}^{2}G^{2}.
\]
Define $\Lambda_{k}=\eta_{k}^{-2}\mathbb{E}\left[f(x_{k})-f_{*}\right]$:
Then rearranging gives:
\[
\frac{1}{\eta_{k}}\mathbb{E}\left[\left\Vert \nabla f(x_{k})\right\Vert ^{2}\right]\leq\Lambda_{k}-\mathbb{E}\left[\Lambda_{k+1}\right]+\frac{1}{2}LG^{2}+(\eta_{k}^{-2}-\eta_{k-1}^{-2})\left[f(z_{k})-f_{*}\right].
\]
Assuming a fixed step size, we telescope from $0$ to $T$ after taking
total expectations:
\[
\frac{1}{\eta}\sum_{k=0}^{T}\mathbb{E}\left[\left\Vert \nabla f(x_{k})\right\Vert ^{2}\right]\leq\Lambda_{0}-\mathbb{E}\left[\Lambda_{T+1}\right]+\frac{1}{2}LG^{2}T.
\]
So:
\[
\frac{1}{T}\sum_{k=0}^{T}\mathbb{E}\left[\left\Vert \nabla f(x_{k})\right\Vert ^{2}\right]\leq\frac{1}{\eta T}\left[f(x_{0})-f_{*}\right]+\frac{1}{2}L\eta G^{2},
\]
using the optimal step size 
\[
\eta=\sqrt{\frac{2\left[f(x_{0})-f_{*}\right]}{TLG^{2}}}
\]
gives:
\[
\frac{1}{T}\sum_{k=0}^{T}\mathbb{E}\left[\left\Vert \nabla f(x_{k})\right\Vert ^{2}\right]\leq\frac{G\sqrt{2L\left[f(x_{0})-f_{*}\right]}}{\sqrt{T}},
\]
which for large $T$, only differs from the SGD+M rate by a factor
$\sqrt{2}$.

\section{Augmented Lyapunov}

In Section \ref{subsec:noise-cancelation}, we consider the case of
constant $\eta$ and $c$, and we introduce the additional assumption
that $\left\langle \nabla f\left(x_{k-1}\right),x_{k-1}-x_{k-2}\right\rangle =0$,
so that:
\begin{align}
\left\Vert x_{k}-x_{k-1}\right\Vert ^{2} & =\left(1-c\right)^{2}\left\Vert x_{k-1}-x_{k-2}\right\Vert ^{2}+c^{2}\eta^{2}\left\Vert \nabla f\left(x_{k-1},\xi_{k-1}\right)\right\Vert ^{2},\label{eq:simplifed_recursion}
\end{align}
We want to modify the Lyapunov function so that we have:
\[
\rho\varGamma_{k+1}\leq\rho\varGamma_{k}+\rho_{k}c^{2}\eta^{2}\left\Vert \nabla f\left(x_{k},\xi_{k}\right)\right\Vert ^{2},
\]
where $\rho$ is a negative, and $\varGamma_{k+1}=\left\Vert x_{k+1}-x_{k}\right\Vert ^{2}$.
Consider the constants in front of the $\left\Vert x_{k}-x_{k-1}\right\Vert ^{2}$
term in the Lyapunov step:
\[
\frac{L}{2c\eta}\left[-\frac{2-c}{\eta}+\frac{L-Lc}{c}\right]\left\Vert x_{k+1}-x_{k}\right\Vert ^{2}.
\]
Using this expression, clearly our requirement on $\rho$ will be
satisfied if:
\[
\rho\left(1-c\right)^{2}+\frac{L}{2c\eta}\left[-\frac{2-c}{\eta}+\frac{L-Lc}{c}\right]=\rho,
\]
solving for $\rho$ gives:
\[
\rho=\frac{L}{2\eta c^{2}}\left[\frac{L\left(1-c\right)}{c\left(2-c\right)}-\frac{1}{\eta}\right].
\]
$\rho$ will be negative when:
\[
\eta\leq\frac{c(2-c)}{L(1-c)},
\]
which covers all reasonable choices of hyper-parameters as considered
in the convergence rate theory above. Using this $\rho$, we have
an additional term in the Lyapunov step equation given by weighting
the gradient noise term in Eq. \ref{eq:simplifed_recursion} by $\rho$:
\[
\rho c^{2}\eta^{2}\left\Vert \nabla f\left(x_{k},\xi_{k}\right)\right\Vert ^{2}=\left[\frac{\eta L\left(1-c\right)}{c\left(2-c\right)}-1\right]\frac{L}{2}\left\Vert \nabla f\left(x_{k},\xi_{k}\right)\right\Vert ^{2}.
\]
This value is very close to $-\frac{L}{2}\left\Vert \nabla f\left(x_{k},\xi_{k}\right)\right\Vert ^{2}$
for sensible hyper-parameter values. For instance, for a typical $\eta=T^{-1/2}L^{-1}$
choice you get for the inner term:
\[
\frac{\eta L\left(1-c\right)}{c\left(2-c\right)}-1=\frac{1-c}{\sqrt{T}c\left(2-c\right)}-1,
\]
which for $c=0.1$ and $T=10,000$, yields $\frac{1-c}{\sqrt{T}c\left(2-c\right)}-1=0.047-1.$

\section{Details of experiments}

In both cases below, when expressed in SPA form, the initial LR 0.1 corresponds to an initial learning rate of 1.0 and $c=0.1$.

\subsection*{CIFAR10}

Our data augmentation pipeline consisted of random horizontal flipping,
then random crop to 32x32, then normalization by centering around
(0.5, 0.5, 0.5). We used the standard learning rate schedule for this
problem, consisting of a 10-fold decrease at epochs 150 and 225. Test/train/validate splits are standard. Total running time is < 24 hours per run. Our results are averaged over 20 seeds for each variant.

\begin{tabular}{|c|c|}
\hline 
Hyper-parameter  & Value\tabularnewline
\hline 
\hline 
Architecture  & PreAct ResNet152\tabularnewline
\hline 
Epochs  & 300\tabularnewline
\hline 
GPUs  & 1xV100\tabularnewline
\hline 
Batch Size per GPU  & 128\tabularnewline
\hline 
Decay  & 0.0001\tabularnewline
\hline 
\end{tabular}

\subsection*{ImageNet}

Data augmentation consisted of the RandomResizedCrop(224) operation
in PyTorch, followed by RandomHorizontalFlip then normalization to
mean={[}0.485, 0.456, 0.406{]} and std={[}0.229, 0.224, 0.225{]}.
We used the standard learning rate schedule for this problem, where
the learning rate is decreased 10 fold every 30 epochs. Test/train/validate splits are standard. Total running time is < 24 hours per run. Our results are averaged over 5 seeds for each variant.

\begin{tabular}{|c|c|}
\hline 
Hyper-parameter  & Value\tabularnewline
\hline 
\hline 
Architecture  & ResNet50\tabularnewline
\hline 
Epochs  & 100\tabularnewline
\hline 
GPUs  & 8xV100\tabularnewline
\hline 
Batch size per GPU  & 32\tabularnewline
\hline 
Decay  & 0.0001\tabularnewline
\hline 
\end{tabular}
\end{document}